\newcommand{\bfy}{\mathbf{y}}
\newcommand{\calA}{\mathcal{A}}
\newcommand{\calN}{\mathcal{N}}
\newcommand{\R}{\mathbb{R}}
\definecolor{codegreen}{rgb}{0,0.6,0}
\definecolor{codegray}{rgb}{0.5,0.5,0.5}
\definecolor{codepurple}{rgb}{0.58,0,0.82}
\definecolor{backcolour}{rgb}{0.95,0.95,0.92}
\lstdefinestyle{mystyle}{
    backgroundcolor=\color{backcolour},   
    commentstyle=\color{codegreen},
    keywordstyle=\color{magenta},
    numberstyle=\tiny\color{codegray},
    stringstyle=\color{codepurple},
    basicstyle=\ttfamily\footnotesize,
    breakatwhitespace=false,         
    breaklines=true,                 
    captionpos=b,                    
    keepspaces=true,                 
    numbers=left,                    
    numbersep=5pt,                  
    showspaces=false,                
    showstringspaces=false,
    showtabs=false,                  
    tabsize=2
}
\newcommand{\veps}{\varepsilon}
\DeclareMathOperator{\TV}{TV}
\theoremstyle{plain}
\newtheorem{theorem}{Theorem}[section]
\newtheorem{corollary}[theorem]{Corollary}
\newtheorem{lemma}[theorem]{Lemma}
\theoremstyle{remark}
\newtheorem{remark}{Remark}[section]
\newtheorem{example}{Example}[section]
\newtheorem{conjecture*}{Conjecture}
\newcounter{savecntr}
\newcounter{restorecntr}
\title{Posterior sampling via Langevin dynamics based on \\ generative priors}
\author{
Vishal Purohit$^{1,}$\setcounter{savecntr}{\value{footnote}}\thanks{The first two authors contributed equally and are listed alphabetically.} , 
Matthew Repasky$^{2,}$\setcounter{restorecntr}{\value{footnote}}%
  \setcounter{footnote}{\value{savecntr}}\footnotemark
  \setcounter{footnote}{\value{restorecntr}} , 
Jianfeng Lu$^{3,4}$, 
Qiang Qiu$^1$,\\
Yao Xie$^2$,
and Xiuyuan Cheng$^{3,}$\thanks{Email: xiuyuan.cheng@duke.edu} 
\vspace{10pt}
\\
\small{$^1$Elmore Family School of Electrical and Computer Engineering, Purdue University} \\
\small{$^2$H. Milton Stewart School of Industrial and Systems Engineering, Georgia Institute of Technology} \\
\small{$^3$Department of Mathematics, Duke University}\\
\small{$^4$Department of Physics and Department of Chemistry, Duke University}\\
}
\date{}
\begin{document}

\maketitle

\begin{abstract}
Posterior sampling in high-dimensional spaces using generative models holds significant promise for various applications, including but not limited to inverse problems and guided generation tasks. Despite many recent developments, generating diverse posterior samples remains a challenge, as existing methods require restarting the entire generative process for each new sample, making the procedure computationally expensive. In this work, we propose efficient posterior sampling by simulating Langevin dynamics in the noise space of a pre-trained generative model. By exploiting the mapping between the noise and data spaces which can be provided by distilled flows or consistency models, our method enables seamless exploration of the posterior without the need to re-run the full sampling chain, drastically reducing computational overhead. Theoretically, we prove a guarantee for the proposed noise-space Langevin dynamics to approximate the posterior, assuming that the generative model sufficiently approximates the prior distribution. Our framework is experimentally validated on image restoration tasks involving noisy linear and nonlinear forward operators applied to LSUN-Bedroom (256 x 256) and ImageNet (64 x 64) datasets. The results demonstrate that our approach generates high-fidelity samples with enhanced semantic diversity even under a limited number of function evaluations, offering superior efficiency and performance compared to existing diffusion-based posterior sampling techniques.
\end{abstract}

\section{Introduction}

Generative models that approximate complex data priors have been leveraged for a range of guided generation tasks in recent years \cite{dhariwal2021diffusion,chung2023diffusion}. Early works focused on conditional synthesis using Generative Adversarial Networks (GANs) \cite{goodfellow2014generative,mirza2014conditional,brock2018large,karras2019style,karras2020analyzing}. However, diffusion models have recently surpassed GANs as the state of the art in generative modeling \cite{ho2020denoising,song2021denoising}, demonstrating superior performance in guided generation tasks \cite{dhariwal2021diffusion,choi2021ilvr,ho2022classifier}. Posterior sampling, as a guided generation framework, has garnered significant interest \cite{kawar2021snips,kawar2022denoising,chung2023diffusion}, particularly for providing candidate solutions to noisy inverse problems. 

Solving noisy inverse problems involves reconstructing an unknown signal $x$ from noisy measurements $y$, where the forward model is characterized by the measurement likelihood $p(y\,|\,x)$. The objective is to sample from the posterior distribution $p(x\,|\,y) \propto p(y\,|\,x)p(x)$. Such posteriors are often intractable in practical applications due to the complexity of the prior distribution $p(x)$. However, learned generative models that approximate complex data priors can enable approximate sampling from the posterior $p(x\,|\,y)$. Early approaches for solving inverse problems using diffusion models to approximate $p(x\,|\,y)$ relied on problem-specific architectures \cite{saharia2022image,li2022srdiff,lugmayr2022repaint} and required training dedicated generative models for each task \cite{saharia2022palette,shi2022conditional}. In contrast, methods that utilize pre-trained diffusion models as priors for posterior sampling offer greater flexibility and are training-free \cite{kawar2021snips,kawar2022denoising,chung2022come,chung2022improving,wang2022zero}, with recent extensions targeting nonlinear inverse problems \cite{chung2023diffusion,song2023pseudoinverse,song2023loss,he2023manifold}.

In the context of inverse problems, existing methods can be broadly categorized based on whether they yield a \textit{point estimate} or \textit{multiple estimates}. Existing approaches for posterior sampling focus primarily on providing point estimate solutions \cite{chung2023diffusion,he2023manifold,song2023pseudoinverse,song2023loss}, lacking the ability to generate a diverse set of posterior samples efficiently. For instance, a prominent method, Diffusion Posterior Sampling (DPS) \cite{chung2023diffusion}, predominantly produces point estimates for both linear and non-linear inverse problems. DPS leverages the prior  $p(x)$  from a diffusion model and employs multiple denoising steps to transform isotropic Gaussian noise into a desired image, guided by the observations $y$. Generating posterior samples using this method requires re-running the entire sampling process using unique instantiations of Gaussian noise, which is computationally prohibitive and inefficient. Therefore, an algorithm that efficiently accumulates samples from the posterior is desirable. 

\begin{figure}[t]
    \centering
    \includegraphics[width=\textwidth]{ 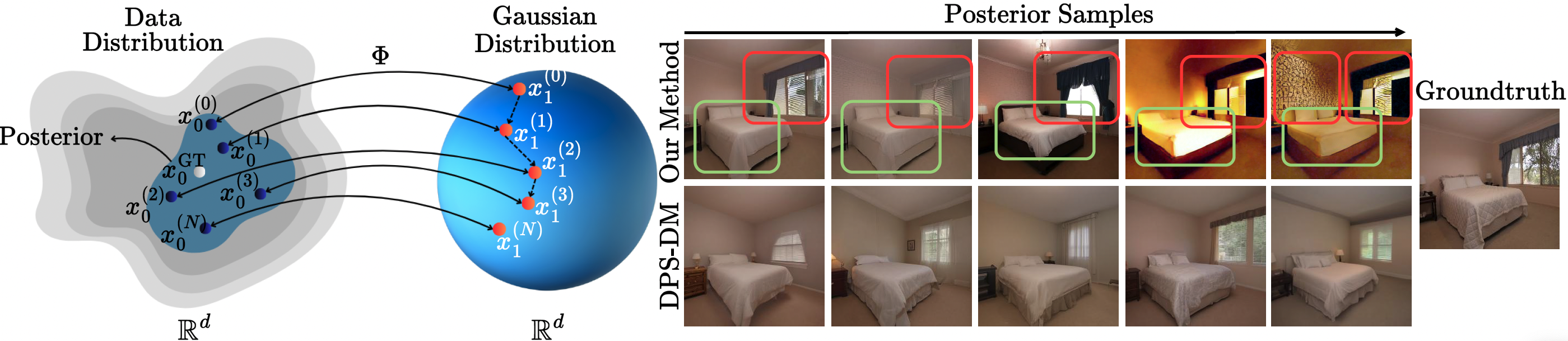} 
    \vspace{-2em}
    \caption{\textbf{(Left) : }A schematic representation of posterior sampling via Langevin dynamics in our proposed framework. The sampling process begins with an initial sample \(x_1^{(0)} \) from the noise space and maps to data space as \( x_0^{(0)}\) using a deterministic mapper $\Phi$ and progressively updates the noise space input to obtain diverse posterior samples. \textbf{(Right)}: Posterior samples generated by our method and DPS-DM. Our approach exhibits higher perceptual diversity, capturing variations in high-level features such as lighting, window style, and wall patterns. Uncertain semantic features are highlighted by red boxes, while persistent properties are shown by green boxes.}
    \label{fig:fig_1}
\end{figure}

\begin{wrapfigure}[23]{R}{0.5\textwidth}
    \vspace{-1em}
    \centering
    \includegraphics[width=0.5\textwidth]{ 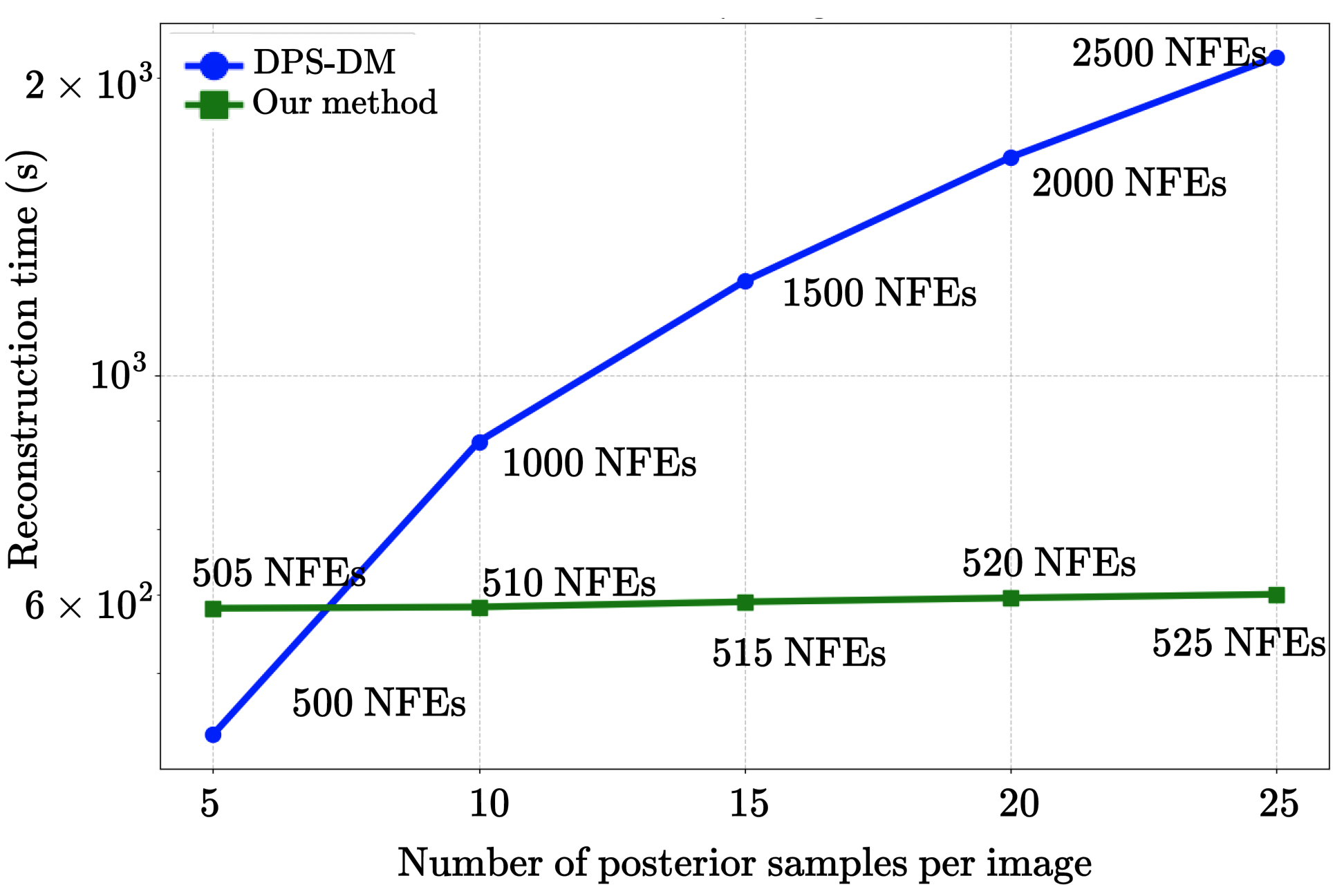} 
    \vspace{-2em}
    \caption{Reconstruction time comparison between DPS-DM and our method for varying numbers of posterior samples. DPS-DM scales poorly with the number of samples, while our method maintains a nearly constant time, demonstrating significantly lower computational cost. The corresponding Number of Function Evaluations (NFEs) (including NFEs for the warmup stage, refer to Section~\ref{sec:algo})  values per image are annotated.}
    \label{fig:fig_2_plot}
\end{wrapfigure}
In this work, we propose an efficient framework for posterior sampling by modeling it as an exploration process in the noise space of a pre-trained generative model. Specifically, we leverage measurements from the inverse problem to guide the initialization of the noise space, ensuring a more targeted exploration. For sampling, we employ Langevin dynamics directly within the noise space, taking advantage of the one-to-one mapping between noise and data spaces provided by models such as consistency models \cite{song2023consistency}. This deterministic mapping eliminates the need for approximating the measurement likelihood, and we establish a theoretical bound on the approximation error for posterior sampling.

Sampling in the noise space allows for a progressive accumulation of posterior samples, enabling efficient exploration and resulting in a diverse set of reconstructions, as demonstrated in Figure~\ref{fig:fig_1}. Furthermore, Figure~\ref{fig:fig_2_plot} illustrates the comparison of reconstruction times between our approach and DPS when generating different numbers of posterior samples per image. While the reconstruction time for DPS increases rapidly with the number of samples, our method incurs only a negligible increase, highlighting its computational efficiency. The key contributions of this work are summarized as follows:
\begin{itemize}
    \item We present a posterior sampling method defined by Langevin dynamics in the noise space of a pre-trained generative model, enabling efficient accumulation of samples.
    \item We provide a theoretical guarantee on the posterior sampling approximation error, which is bounded by the approximation error of the prior by the pre-trained generative model.
    \item Our efficient accumulation of posterior samples facilitates exploration of the posterior, yielding high-fidelity and diverse samples. In experiments, we achieve comparable fidelity to diffusion model posterior sampling methods with superior sample diversity.
\end{itemize}

\paragraph{Notation.}
We use $\propto $ to stand for the expression of a probability density up to a normalizing constant to enforce integral one, 
e.g. $p(x) \propto F(x)$ means that $p(x) = F(x)/Z$ where $Z = \int F(x) dx$. 
For a mapping $T: \R^d \to \R^d$ and a distribution $P$, $T_\# P$ stands for the push-forwarded distribution, 
that is $T_\# P( A ) = P( T^{-1} A)$ for any measurable set $A$. When both $P$ and $T_\# P$ has density, $dP = pdx$, we also use $T_\# p$ to denote the density  of $T_\# P$.

\section{Background}

\paragraph{Diffusion models.}
Sampling from diffusion models (DMs) is accomplished via simulation of the reverse process corresponding to the forward-time, noising stochastic differential equation (SDE)
$dx_t = \mu(x_t,t) dt + \beta(t) dW_t$ \cite{song2021score},
where $W_t$ is the standard Brownian motion in $\R^d$ and $t\in[0,1]$. Initialized with data from a data-generating distribution $p_{\rm data}$, diffusion is typically parameterized such that the terminal distribution of the forward-time SDE is a tractable Gaussian distribution $\gamma$. This SDE shares marginal densities $p_t$ with the \textit{probability flow- (PF-)ODE}:
\begin{equation}\label{eq:pf_ode}
    dx_t = \left[ \mu(x_t,t) - \frac{1}{2}\beta(t)^2\nabla \log p_t(x_t) \right] dt.
\end{equation}
Score-based generative models are a class of DM which approximate $\nabla \log p_t(x_t)$ with a neural network score model. Given such a model, \eqref{eq:pf_ode} can be solved in reverse time using numerical ODE integration techniques \cite{song2021denoising,Karras2022edm}.

\paragraph{Deterministic diffusion solvers.}
In contrast to stochastic DM samplers based on Markov chains \cite{ho2020denoising} and SDEs \cite{song2021score}, deterministic DM solvers primarily focus on simulating the PF-ODE~\eqref{eq:pf_ode}. \cite{song2021denoising} presented DDIM, an implicit modeling technique yielding a deterministic mapping between noise and data samples. Subsequent works considered alternate, higher-order solvers for the PF-ODE \cite{Karras2022edm}, yielding high-quality samples in fewer function evaluations.

\paragraph{Flow models.} Continuous normalizing flows (CNFs) represent another class of ODE-based generative models, using neural networks to approximate the dynamics of a continuous mapping between noise and data \cite{chen2018neural}. Recent extensions focus on learning more direct trajectories \cite{liu2022flow} and simulation-free training \cite{lipman2022flow}. As with deterministic diffusion solvers corresponding to the PF-ODE, sampling via these methods requires numerical simulation of an ODE whose dynamics are defined by the neural network model.

\paragraph{Consistency models.}
Efficient ODE simulation is of particular interest for efficient sampling from DMs \cite{song2021denoising,Karras2022edm} and CNFs \cite{lipman2022flow}. However, fast numerical ODE solvers still require tens of steps to produce high-fidelity samples \cite{lu2022dpm,dockhorn2022genie}. As a result, score model distillation techniques have arisen to yield fast, effective samplers from the PF-ODE. Consistency models (CMs) are a prominent class of distilled DMs that enable single- and few-step sampling \cite{song2023consistency}. CMs learn a mapping $f_{\theta}$ (parameterized by $\theta$) between a point $x_t$ along the PF-ODE trajectory to the initial state:
\begin{equation}
    x_0 = f_{\theta}(x_t, t) \text{ for } t\in[0,1],
\end{equation}
where $x_0$ is a sample from $p_{\rm data}$. Therefore, single-step sampling can be achieved by sampling $x_1\sim\gamma$ and evaluating the CM at $x_1$. Multi-step sampling can be achieved by alternating denoising (via evaluation of the CM) and partial noising, trading off efficiency for fidelity.

\section{Methodology}\label{sec:method}

Assume that a pre-trained generative model is given, which provides a one-to-one mapping $\Phi$ from the noise space to the data space. 
The data $ x_0$ and noise $x_1$ both belong to $\R^d$,  and 
$x_0 = \Phi( x_1) $.
The observation is $y$, and the goal is to sample the data $x_0$ from the posterior distribution $p( x_0 | y)$.
We derive the posterior sampling of the data vector $x_0$ via that of the noise vector $x_1$, making use of the mapping $\Phi$.

\paragraph{Likelihood and posterior.}
We consider a general observation model where the conditional law $p( y | x_0 )$ is known and differentiable. 
Define the negative log conditional likelihood as 
$L_y ( x_0 ) : = - \log p( y| x_0)$,
which is differentiable with respect to $x_0$ for fixed $y$. A typical case is the inverse problem setting: the {\it forward} model is 
\begin{equation}\label{eq:forward-IP}
y = \calA( x_0) + n,     
\end{equation}
where $\calA: \R^d \to \R^d $ is the (possibly nonlinear) measurement operator, and $n$ is the additive noise. For fixed $y$, we aim to sample $x_0$ from
$p( x_0 | y) =  { p( y| x_0 ) p(x_0)}/{p(y)} \propto p( y| x_0 ) p(x_0)$, 
where $p(x_0)$ is the true prior distribution of all data $x_0$, which we now denote as $p_{\rm data}$. We also call $p( x_0 | y)$ the {\it true} posterior of $x_0$,  donated as 
\begin{equation}\label{eq:true-post-x0}
    p_{0,y}( x_0) := p( x_0 | y)  \propto p( y| x_0 ) p_{\rm data}(x_0).
\end{equation}

\paragraph{Posterior approximated via generative model.}
The true data prior $p_{\rm data}$ is nonlinear and complicated. Let $p_{\rm model}$ denote the prior distribution approximated by a pre-trained generative model $x_0 = \Phi(x_1) $, where $x_1 \sim \gamma$.
A distribution from which samples are easily generated, such as the standard multi-variate Gaussian, is typically chosen for $\gamma$; we choose $\gamma = \calN(0, I)$. 
In other words,
\begin{equation}\label{eq:model-approx-prior}
 p_{\rm data} \approx p_{\rm model} = \Phi_\# \gamma.
\end{equation}
Replacing $p_{\rm data}$ with $p_{\rm model}$ in \eqref{eq:true-post-x0} gives the {\it model} posterior of $x_0$, denoted $\tilde p_{0,y}$, which approximates the true posterior:
\begin{equation}\label{eq:model-post-x0}
p_{0,y} (x_0) \approx \tilde p_{0,y} (x_0) \propto p( y| x_0 ) \Phi_\# \gamma (x_0).
\end{equation}
Because $x_0 = \Phi(x_1)$, we have that 
$\tilde p_{0,y}  = \Phi_\# \tilde p_{1,y},$
where, by a change of variable from \eqref{eq:model-post-x0}, 
\begin{equation}\label{eq:approx-post-x1}
\tilde p_{1, y} (x_1) 
\propto  p( y | \Phi(x_1)) \gamma(x_1).
\end{equation}
The distribution $\tilde p_{1, y} (x_1) $ approximates the posterior distribution $p(x_1 |y)$ in the noise space. When $p_{\rm data} = \Phi_\# \gamma$, we have  $p_{0,y} = \tilde p_{0,y}$ and $p( \cdot |y) = \tilde p_{1, y} $. 
When the generative model prior is inexact, the error in approximating the posterior can be bounded by that in approximating the data prior, see more in Section \ref{sec:theory}.

\paragraph{Posterior sampling by Langevin dynamics.}
It is direct to sample the approximated posterior \eqref{eq:approx-post-x1} in the noise space using Langevin dynamics. 
Specifically, since we have
$\gamma (x_1) \propto \exp(- \|x_1\|^2/2)$ and $\log p( y | \Phi(x_1)) = - L_y( \Phi(x_1) )$,
the following SDE of $x_1$ will have $\tilde p_{1, y} $ as its equilibrium distribution (proved in Lemma \ref{lemma:x1-sde-equilibrium}):
\begin{equation}\label{eq:SDE-x1}
d x_1 = - (  x_1 + \nabla_{ x_1} L_y ( \Phi(x_1) ) ) dt + \sqrt{2} d W_t.
\end{equation} 
The sampling in the noise space gives the sampling in the data space by the one-to-one mapping of the generative model, namely $x_0 = \Phi(x_1)$.

\begin{example}[Inverse problem with Gaussian noise]\label{eg:gaussian-noise}
For \eqref{eq:forward-IP} with white noise, i.e., $ n\sim \calN( 0, \sigma^2 I)$, we have that, with a constant $c$ depending on $(\sigma,d)$, 
\begin{equation*}
   L_y(x_0) = - \log p( y| x_0) 
   = \frac{1 }{2\sigma^2} \| y - \calA( x_0)\|_2^2 + c.
\end{equation*}
The noise-space SDE \eqref{eq:SDE-x1} can be written as 
\begin{equation*}
d x_1 = - \left(  x_1 + \nabla_{ x_1} \frac{\|\bfy - \calA(x_0)\|_2^2 }{2\sigma^2} \right) dt + \sqrt{2} d W_t.    
\end{equation*}
\end{example}

Given $L_y( x_0 )$, standard techniques can be used to sample (overdamped) Langevin dynamics \eqref{eq:SDE-x1}. 
Evaluation of the gradient $\nabla_{ x_1} L_y ( x_0  ) $ is the major computational cost, requiring differentiation through the model $\Phi$. 
One technique to improve sampling efficiency is to employ a warm-start of the SDE integration by letting the minimization-only dynamics (using $\nabla_{ x_1} L_y ( x_0  ) $) to converge to a minimum first, especially when the posterior concentrates around a particular point.  
We postpone the algorithmic details to Section \ref{sec:algo}. 

\section{Theory}\label{sec:theory}

In this section, we derive the theoretical guarantee of the model posterior $\tilde p_{0,y}$ in \eqref{eq:model-post-x0} to the true posterior $p_{0,y}$ in \eqref{eq:true-post-x0},
and also extend to the computed posterior $\tilde p_{0,y}^S$ by discrete-time SDE integration.
The analysis reveals a conditional number which indicates the intrinsic difficulty of the posterior sampling problem. 
All proofs are in Appendix~\ref{app:proof}.

\subsection{Total Variation (TV) guarantee and condition number}

Consider the approximation \eqref{eq:model-approx-prior}, that is,  the pre-trained model generates a data prior distribution $\Phi_\# \gamma$ that approximates the true data prior $p_{\rm data}$. 
We quantify the approximation in TV distance, namely 
\begin{equation}\label{eq:TV-prior}
    \TV(p_{\rm  data }, \Phi_{\#} \gamma) \leq \veps.
\end{equation}
Generation guarantee in terms of TV bound has been derived in several flow-based generative model works, such as \cite{chen2023restoration,li2023towards,huang2024convergence} on the PF-ODE of a trained score-based diffusion model  \cite{song2021score},
and \cite{cheng2023convergence} on the JKO-type flow model  \cite{xu2022jko}.
The following theorem proved in Appendix~\ref{app:proof} shows that the TV distance between the model and true posteriors can be bounded proportional to that between the priors.

\begin{theorem}[TV guarantee]\label{thm:post-tv}
Assuming \eqref{eq:TV-prior}, then
$\TV( p_{0,y}, \tilde p_{0,y})
    \le 2 \kappa_y \veps$,
where 
\begin{equation}\label{eq:def-kappay}
\kappa_y := \frac{\sup_{x_0} p(y|x_0)}{\int p(y|x) p_{\rm data}(x) dx}.    
\end{equation}
\end{theorem}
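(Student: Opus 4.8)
The plan is to write both posteriors explicitly as normalized products of the shared likelihood $\ell(x) := p(y\,|\,x)$ with their respective priors, and then track separately how the discrepancy in the priors and the discrepancy in the normalizing constants each contribute to the total variation. Concretely, set $Z := \int \ell(x)\, p_{\rm data}(x)\, dx$ and $\tilde Z := \int \ell(x)\, \Phi_\#\gamma(x)\, dx$, so that $p_{0,y} = \ell\, p_{\rm data}/Z$ and $\tilde p_{0,y} = \ell\, \Phi_\#\gamma/\tilde Z$ by \eqref{eq:true-post-x0} and \eqref{eq:model-post-x0}, and observe that $\kappa_y = \sup_x \ell(x)/Z$.

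The key step is to add and subtract the common term $\ell\,\Phi_\#\gamma/Z$ to decompose the pointwise difference as
\[
p_{0,y}(x) - \tilde p_{0,y}(x) = \frac{\ell(x)\,\bigl(p_{\rm data}(x) - \Phi_\#\gamma(x)\bigr)}{Z} + \tilde p_{0,y}(x)\,\frac{\tilde Z - Z}{Z}.
\]
The first term isolates the difference of the priors weighted by the likelihood, while the second is proportional to $\tilde p_{0,y}$ itself and carries the difference of the two normalizing constants. I would then integrate the absolute value and bound the two pieces in turn. For the first piece, factoring out $\sup_x \ell(x)$ and using $\int \abs{p_{\rm data} - \Phi_\#\gamma}\,dx = 2\,\TV(p_{\rm data},\Phi_\#\gamma) \le 2\veps$ from \eqref{eq:TV-prior} yields the bound $2\veps\,\sup_x\ell(x)/Z = 2\kappa_y\veps$. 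For the second piece, the crucial observation is that $\tilde Z - Z = \int \ell(x)\,(\Phi_\#\gamma(x) - p_{\rm data}(x))\,dx$, so that $\abs{\tilde Z - Z} \le \sup_x\ell(x)\cdot 2\,\TV(p_{\rm data},\Phi_\#\gamma) \le 2\veps\,\sup_x\ell(x)$; combined with $\int \tilde p_{0,y}\,dx = 1$ this piece is likewise bounded by $2\kappa_y\veps$. Summing the two and dividing by two gives $\TV(p_{0,y},\tilde p_{0,y}) = \tfrac12\int\abs{p_{0,y}-\tilde p_{0,y}}\,dx \le 2\kappa_y\veps$.

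The argument is essentially self-contained once the decomposition is in hand, and I expect no substantial obstacle. The one point requiring care is the normalizing-constant term: it must be controlled by the same likelihood-weighted TV estimate, not by the raw TV distance, and this is exactly what forces the condition number $\kappa_y$ — rather than merely $1/Z$ — to appear in the bound. The implicit requirement for the estimate to be meaningful is that $\sup_x \ell(x) < \infty$, equivalently $\kappa_y < \infty$, which I would note makes the interpretation of $\kappa_y$ as a measure of intrinsic problem difficulty precise.
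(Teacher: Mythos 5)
Your proposal is correct and follows essentially the same route as the paper's proof: the same decomposition into a likelihood-weighted prior-difference term plus a normalizing-constant term, with both pieces bounded by $2\kappa_y\veps$ via $\sup_{x}p(y|x)/Z = \kappa_y$ and $\int|p_{\rm data}-\Phi_\#\gamma|\,dx = 2\TV(p_{\rm data},\Phi_\#\gamma)\le 2\veps$. No gaps; your closing remark on the necessity of $\kappa_y<\infty$ is a sensible observation consistent with the paper's interpretation of $\kappa_y$ as a condition number.
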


\begin{remark}[$\kappa_y$ as a condition number]
The constant factor $\kappa_y$ is determined by the true data prior $p_{\rm data}$ and the conditional likelihood $p(y|x_0)$ of the observation, and is independent of the flow model and the posterior sampling method. 
Thus $\kappa_y$ quantifies an intrinsic ``difficulty'' of the posterior sampling, which can be viewed as a condition number of the problem. 
\end{remark}

\vspace{5pt}
\begin{example}[Well-conditioned problem]
Suppose $p(y|x_0) \le c_1$ for any $x_0$, 
and on a domain $\Omega_y$ of the data space, 
\[
P_{\rm data}(\Omega_y) \ge \alpha >0,
\quad \text{ and } \quad 
p(y|x_0) \ge c_0 >0, \, \forall x_0 \in \Omega_y,
\]    
then we have $\int p(y|x) p_{\rm data}(x) dx \ge \int_{\Omega_y} p(y|x) p_{\rm data}(x) dx \ge \alpha c_0$, and then 
\[
\kappa_y \le \frac{1}{\alpha }\frac{c_1}{ c_0}.
\]
This shows that if the observation $y$ can be induced from some cohort of $x_0$ and this cohort is well-sampled by the data prior $p_{\rm data}$ (the concentration of $p_{\rm data}$ on this cohort is lower bounded by $\alpha $), 
plus that the most likely $x_0$ is not too peaked compared to the likelihood of any other $x_0$ within this cohort (the ratio is upper bounded by $c_1/c_0$), then the posterior sampling is well-conditioned. 
\end{example}

\vspace{5pt}
\begin{example}[Ill-conditioned problem]
Suppose $p(y|x_0)$ is peaked at one data value $x_0'$ and almost zero at other places, and this $x_0'$ lies on the tail of the data prior density $p_{\rm data}$. 
This means that the integral $\int p(y|x_0) p_{\rm data}(x_0) dx_0$ has all the contribution on a nearby neighborhood of $x_0'$ on which $p_{\rm data}$ is small, resulting in a small value on the denominator of \eqref{eq:def-kappay}. Meanwhile, the value of $p(y|x_0')$ is large. 
In this case, $\kappa_y$ will take a large value, indicating an intrinsic difficulty of the problem.
Intuitively, the desired data value $x_0'$ for this observation $y$ is barely represented within the (unconditional) data distribution $p_{\rm data}$, while the generative model can only learn from $p_{\rm data}$. 
Since the pre-trained unconditional generative model does not have enough knowledge of such $x_0'$, it is hard for the conditional generative model (based on the unconditional model) to find such a data value.
\end{example}

\subsection{TV guarantee of the sampled posterior}

Theorem \ref{thm:post-tv} captures the approximation error of $\tilde p_{0,y}$ to the true posterior, where $\tilde p_{0,y}$ is the distribution of data $x_0$ when the noise $x_1$ in noise space achieves the equilibrium $\tilde p_{1,y}$ of the SDE \eqref{eq:SDE-x1}.
In practice, we use a numerical solver to sample the SDE in discrete time.
The convergence of discrete-time SDE samplers to its equilibrium distribution has been established under various settings in the literature.
Here, we assume that the discrete-time algorithm to sample the Langevin dynamics of $x_1$ outputs $x_1 \sim \tilde p_{1,y}^S$, which may differ from but is close to the equilibrium $ \tilde p_{1,y}$.
Specifically, suppose $\TV ( \tilde p_{1,y}, \tilde p_{1,y}^S)$ is bounded by some $\varepsilon_S$.

\begin{lemma}[Sampling error]\label{lemma:sampling-error-dpi}
If $\TV ( \tilde p_{1,y}, \tilde p_{1,y}^S) \le \varepsilon_S$, then 
$\TV( \tilde p_{0,y}, \tilde p_{0,y}^S  ) \le \varepsilon_S $.
\end{lemma}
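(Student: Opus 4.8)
The plan is to recognize this statement as a direct instance of the data processing inequality for total variation distance, applied to the common push-forward map $\Phi$. The key structural observation is that both data-space distributions arise from their noise-space counterparts through the \emph{same} deterministic generative map: by construction $\tilde p_{0,y} = \Phi_\# \tilde p_{1,y}$ (as recorded just after \eqref{eq:model-post-x0}), and analogously the computed posterior satisfies $\tilde p_{0,y}^S = \Phi_\# \tilde p_{1,y}^S$, since the discrete-time algorithm produces data samples $x_0 = \Phi(x_1)$ from its noise samples $x_1 \sim \tilde p_{1,y}^S$. Thus the entire claim reduces to showing that pushing two measures forward through a single measurable map cannot increase their TV distance.

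First I would establish the abstract statement: for any measurable $T\colon \R^d \to \R^d$ and any two probability measures $P, Q$ on $\R^d$, one has $\TV(T_\# P, T_\# Q) \le \TV(P,Q)$. I would prove this from the variational characterization $\TV(P,Q) = \sup_A \abs{P(A) - Q(A)}$, where the supremum runs over measurable sets $A$. Using $T_\# P(B) = P(T^{-1}B)$ and likewise for $Q$, for every measurable $B$ the pre-image $A := T^{-1} B$ is itself measurable, so $\abs{T_\# P(B) - T_\# Q(B)} = \abs{P(A) - Q(A)} \le \TV(P,Q)$; taking the supremum over $B$ yields the inequality.

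I would then apply this with $T = \Phi$, $P = \tilde p_{1,y}$, and $Q = \tilde p_{1,y}^S$, and invoke the hypothesis $\TV(\tilde p_{1,y}, \tilde p_{1,y}^S) \le \varepsilon_S$ to conclude $\TV(\tilde p_{0,y}, \tilde p_{0,y}^S) \le \varepsilon_S$. The only regularity I need is that $\Phi$ be measurable, which holds since it is the continuous map realized by the pre-trained generative model.

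There is essentially no hard obstacle here; the lemma is a soft, dimension-free consequence of the data processing inequality and requires no quantitative control of $\Phi$. I would only remark that, because $\Phi$ is assumed one-to-one with measurable inverse, applying the same argument to $\Phi^{-1}$ gives the reverse inequality, so in fact $\TV(\tilde p_{0,y}, \tilde p_{0,y}^S) = \TV(\tilde p_{1,y}, \tilde p_{1,y}^S)$ and the stated bound follows a fortiori. The single point worth stating carefully is the identification of $\tilde p_{0,y}^S$ with $\Phi_\# \tilde p_{1,y}^S$, which is where the structure of the noise-space sampling scheme enters.
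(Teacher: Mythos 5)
Your proposal is correct and follows exactly the paper's own argument: identify $\tilde p_{0,y} = \Phi_\# \tilde p_{1,y}$ and $\tilde p_{0,y}^S = \Phi_\# \tilde p_{1,y}^S$, then apply the data processing inequality for total variation. The paper states this in one line; you additionally supply the standard proof of the DPI via the variational characterization and note the equality when $\Phi$ is invertible, but the route is the same.
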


The lemma is by Data Processing Inequality, and together with Theorem \ref{thm:post-tv} it directly leads to the following corollary on the TV guarantee of the sampled posterior.

\begin{corollary}[TV of sampled posterior]\label{cor:sampled-tv}
Assuming \eqref{eq:TV-prior} 
and $\TV ( \tilde p_{1,y}, \tilde p_{1,y}^S) \le \varepsilon_S$,
then
\[ \TV( p_{0,y}, \tilde p_{0,y}^S)
    \le 2 \kappa_y \veps  + \varepsilon_S .
    \]
\end{corollary}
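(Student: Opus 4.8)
The plan is to combine the two preceding results via the triangle inequality for the total variation distance, since all three distributions $p_{0,y}$, $\tilde p_{0,y}$, and $\tilde p_{0,y}^S$ are probability measures on the data space $\R^d$ and $\TV$ is a metric thereon. Concretely, I would first introduce the model posterior $\tilde p_{0,y}$ as an intermediate bridge between the true posterior and the computed posterior, writing
\[
\TV( p_{0,y}, \tilde p_{0,y}^S) \le \TV( p_{0,y}, \tilde p_{0,y}) + \TV( \tilde p_{0,y}, \tilde p_{0,y}^S).
\]

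Next I would bound each term separately using the results already in hand. The first term is controlled directly by Theorem \ref{thm:post-tv}, which under assumption \eqref{eq:TV-prior} gives $\TV(p_{0,y}, \tilde p_{0,y}) \le 2\kappa_y \veps$. The second term is controlled by Lemma \ref{lemma:sampling-error-dpi}, which converts the noise-space sampling accuracy $\TV(\tilde p_{1,y}, \tilde p_{1,y}^S) \le \varepsilon_S$ into the data-space bound $\TV(\tilde p_{0,y}, \tilde p_{0,y}^S) \le \varepsilon_S$ via the Data Processing Inequality applied to the pushforward by $\Phi$. Substituting both bounds into the triangle inequality yields the claimed $2\kappa_y \veps + \varepsilon_S$.

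I do not anticipate any substantive obstacle in the corollary itself: the genuine content resides in Theorem \ref{thm:post-tv} (the prior-to-posterior TV transfer isolating the condition number $\kappa_y$) and in Lemma \ref{lemma:sampling-error-dpi} (the non-expansiveness of $\TV$ under pushforward), both of which may be assumed available. The one point worth checking explicitly is that $\tilde p_{0,y} = \Phi_\# \tilde p_{1,y}$ and $\tilde p_{0,y}^S = \Phi_\# \tilde p_{1,y}^S$ use the \emph{same} map $\Phi$, so that the intermediate distribution appearing in the triangle inequality is exactly the one bounded by Theorem \ref{thm:post-tv}; this consistency is immediate from the definition of the model posterior in \eqref{eq:model-post-x0} and the surrounding construction, so the three-way comparison is well-posed and the two estimates add cleanly.
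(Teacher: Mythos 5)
Your proposal is correct and matches the paper's own proof exactly: the paper likewise combines Theorem \ref{thm:post-tv} and Lemma \ref{lemma:sampling-error-dpi} via the triangle inequality for $\TV$, with $\tilde p_{0,y}$ as the intermediate distribution. Your additional check that both pushforwards use the same map $\Phi$ is a sensible (if routine) sanity check that the paper leaves implicit.
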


\section{Algorithm}\label{sec:algo}

\paragraph{Numerical integration of the Langevin dynamics.}
To numerically integrate the noise-space SDE \eqref{eq:SDE-x1}, one can use standard SDE solvers. We adopt the Euler-Maruyama (EM) scheme. Let $\tau >0$ be the time step, and denote the discrete sequence of $x_1$ as $z^i$, $i=0, 1, \cdots$. The EM scheme gives, with $\xi^i\sim \calN(0,I)$,
\begin{equation}\label{eq:sde-x1-euler}
z^{i+1} 
= (1-\tau) z^i - \tau {g}^i + \sqrt{2 \tau} \xi^i, 
\quad {g}^i := \nabla_{x_1} L_y (x_0)|_{ x_1 = z^i}.
\end{equation}
See Algorithm~\ref{alg:cap} for an outline of our approach using EM. However, any general numerical scheme for solving SDEs can be applied; see Table~\ref{tab:em_vs_ei} in Appendix~\ref{app:additional_exp} for a comparison between our method using EM discretization and exponential integrator (EI) \cite{hochbruck2010exponential}. An initial value of $z^0$ in the noise space is also required. We adopt a warm-start procedure to initialize sampling; additional details are provided below.
\begin{wrapfigure}[12]{R}{0.5\textwidth}
\begin{minipage}{0.5\textwidth}
    \vspace{-3em}
    \begin{algorithm}[H]
    \caption{Posterior Sampling in Noise Space}\label{alg:cap}
        \begin{algorithmic}
            \Require Forward model $\calA$, measurement $y$, loss function $L_y$, pre-trained noise-to-data map $\Phi$, number of steps $N$, step size $\tau$, and initial $x_1^0$ \
            \For{$i=0,\dots,N$} \
                \State $x_0^i \gets \Phi(x_1^i)$ \
                \State $g^i \gets \nabla_{x_1^i} L_y(x_0^i)$ \
                \State $\xi^i \sim \mathcal{N}(0,I)$ \
                \State $x_1^{i+1} \gets x_1^i - \tau( x_1^i + g^i ) + \sqrt{2\tau} \xi^i$ \
            \EndFor
            \\
            \Return $x_0^{1},x_0^{2},\dots,x_0^{N}$
        \end{algorithmic}
    \end{algorithm}
\end{minipage}
\end{wrapfigure}

\paragraph{Computation of $\nabla x_1 L_y(x_0)$.}
The computation of the loss gradient depends on the type of generative model representing the mapping $\Phi$. For instance, if $\Phi$ is computed by solving an ODE driven by a normalizing flow, then its gradient can be computed using the adjoint sensitivity method \cite{chen2018neural}. If $\Phi$ is a DM or CM sampling scheme, one can backpropagate through the nested function calls to the generative model. Since we use one- or few-step CM sampling to represent $\Phi$ in the experiments, we take the latter approach to compute $\nabla x_1 L_y(x_0)$.

\paragraph{Choice of initial value and warm-start.}
A natural choice for the initial noise space value $z^0$ can be a generic sample $z^0\sim \gamma$ (the noise space prior). However, while this will correspond to a high-likelihood sample according to the data prior (given a well-trained generative model), it may be far from the data posterior. As such, one may warm-start the sampler by optimizing $L_y (x_0)$ with respect to $x_1$ using standard optimization techniques such as gradient descent or Adam. In all experiments, we warm start sampling using $K$ steps of Adam optimization, initializing EM sampling with $z^0$ being the optimization output. See Appendix~\ref{app:ours_details} for further detail.

\paragraph{Computational requirements.}
The main computational burden is with respect to the computation of the loss gradient $\nabla_{x_1} L_y(x_0)$, which requires differentiating through the mapping $\Phi$.  This can be alleviated by choosing a $\Phi$ which consists of a small number of function evaluations (NFEs). Additional computational burden is due to burn-in/warm start to yield $z^0$, the initial value of EM simulation. Therefore, the total NFEs to simulate $N$ steps of EM (i.e., to yield $N$ samples) is $\eta\cdot(K+N)$, where $\eta$ is the NFEs required to evaluate $\Phi$. However, this burden is amortized over EM sampling, as progressive EM simulation yields increasingly fewer overall NFEs per sample, which asymptotically approaches $\eta$ (the NFEs required to compute $\Phi$). Therefore, we represent $\Phi$ using CM sampling, which can be accomplished for $\eta=1$ or 2. While multi-step ($\eta>1$) CM sampling is typically stochastic \cite{song2023consistency}, we fix the noise in each step to result in a deterministic mapping. See Appendix~\ref{app:ours_details} for details.

\paragraph{Role of step size $\tau$.} The step size of EM, $\tau$, controls the time scales over which the Langevin dynamics are simulated with respect to the number of EM steps. Larger $\tau$ results in more rapid exploration of the posterior, potentially leading to more diverse samples over shorter timescales. However, $\tau$ must also be kept small enough to ensure the stability of EM sampling. Therefore, this hyper-parameter provides a degree of control over the diversity of samples provided by the proposed algorithm. Choosing large $\tau$ while maintaining stability can yield diverse samples, potentially revealing particularly uncertain semantic features within the posterior.

\begin{table*}[t!]
\centering
\tabcolsep=0.08cm
\caption{Quantitative comparison of linear image restoration tasks on LSUN-Bedroom (256 x 256) (top table) and ImageNet (64 x 64) (bottom table).}
\label{tab:linear_fidelity}
\resizebox{\textwidth}{!}{%
\begin{tabular}{l|cccc|cccc|cccc}
\toprule[1.5pt]
\multirow{2}{*}{\large Method} & \multicolumn{4}{c|}{8x Super-resolution}           & \multicolumn{4}{c|}{Gaussian Deblur}          & \multicolumn{4}{c}{10$\%$ Inpainting} \\ \cmidrule{2-13}
                      & PSNR $\uparrow$  & SSIM $\uparrow$  & LPIPS $\downarrow$ & FID $\downarrow$   & PSNR $\uparrow$  & SSIM $\uparrow$  & LPIPS$\downarrow$ & FID $\downarrow$   & PSNR $\uparrow$  & SSIM $\uparrow$  & LPIPS$\downarrow$ & FID $\downarrow$  \\ \midrule[1pt]
DPS-DM & 20.4$^{*}$   &0.538$^{*}$   & 0.470$^{*}$  & 67.7$^{*}$   &22.1  &0.589  &0.407    &65.3  & 22.4  & 0.634  & 0.417 & 67.7     \\
MPGD-DM &19.2  &0.338  &0.689       &288  &23.6$^{*}$  &0.579  &0.438  &85.0   & 15.4 & 0.176  & 0.667 & 221     \\
LGD-DM & 20.1 & 0.529  &0.483 &69.3  &22.2  &0.590$^{*}$  &0.371$^{*}$   &60.1$^{*}$  & 24.7$^{*}$  & 0.742$^{*}$  & 0.289$^{*}$ & 47.3$^{*}$     \\ \hline
DPS-CM &10.7 &0.077  &0.758  &307  &11.2  &0.092  &0.735 & 279  & 19.9  & 0.454 & 0.517  & 128     \\
LGD-CM &10.5  &0.072  &0.764 &316  & 11.1 & 0.092 &0.737  &283  & 19.9 & 0.475  & 0.514  & 134     \\ 
CMEdit &\multicolumn{4}{c}{N/A} &  \multicolumn{4}{|c|}{N/A}  & 18.0 & 0.523  & 0.548  & 167     \\ \hline

Ours(1-step) &\underline{20.4}  & \textbf{0.535}  &\textbf{0.418} &\textbf{71.1}  &\textbf{22.4}  &\textbf{0.598}  &\textbf{0.368}       &\underline{70.6}  &\textbf{23.8}  & \textbf{0.682}  & \textbf{0.358} & \textbf{72.9} \\
Ours(2-step) &\textbf{20.5}  & \underline{0.534} &\underline{0.433}  &\underline{72.2}  &\underline{21.3}  &\underline{0.554}        &\underline{0.421}  &\textbf{69.2}  &\underline{22.2}  &\underline{0.611}  &\underline{0.419}  & \underline{75.6}  \\
\bottomrule[1.5pt]
\end{tabular}%
}
\vspace{-1em}
\resizebox{\textwidth}{!}{%
\begin{tabular}{l|cccc|cccc|cccc}
\toprule[1.5pt]
\multirow{2}{*}{\large Method} 
& \multicolumn{4}{c|}{4x Super-resolution}
& \multicolumn{4}{c|}{Gaussian Deblur}
& \multicolumn{4}{c}{20$\%$ Inpainting} \\ \cmidrule{2-13}
                      & PSNR $\uparrow$  & SSIM $\uparrow$  & LPIPS $\downarrow$ & FID $\downarrow$   & PSNR $\uparrow$  & SSIM $\uparrow$  & LPIPS$\downarrow$ & FID $\downarrow$   & PSNR $\uparrow$  & SSIM $\uparrow$  & LPIPS$\downarrow$ & FID $\downarrow$  \\ \midrule[1pt]
DPS-DM      & 21.0$^{*}$ & 0.531 & 0.310$^{*}$ & 110$^{*}$ & 19.2 & 0.429 & 0.348$^{*}$ & 117$^{*}$ & 22.3$^{*}$ & 0.664$^{*}$ & 0.220$^{*}$ & 89.2$^{*}$   \\
LGD-DM      & 21.0$^{*}$ & 0.536$^{*}$ & 0.311 & 114 & 19.6$^{*}$ & 0.432$^{*}$ & 0.352 & 117$^{*}$ & 22.1 & 0.652 & 0.228 & 96.2   \\
\hline
DPS-CM      & 12.8 & 0.168 & 0.602 & 267 & 9.89 & 0.093 & 0.650 & 334 & \underline{18.9} & \underline{0.470} & \underline{0.371} & 167     \\
LGD-CM      & 12.8 & 0.164 & 0.607 & 269 & 10.1 & 0.097 & 0.668 & 363 & 18.7 & 0.451 & 0.380 & 173     \\
\hline
Ours(1-step)     & \underline{16.9} & \textbf{0.418} & \textbf{0.388} & \textbf{129} & \textbf{18.2} & \textbf{0.413} & \textbf{0.381} & \textbf{134} & \textbf{20.3} & \textbf{0.600} & \textbf{0.304} & \textbf{124}   \\
Ours(2-step)     & \textbf{18.1} & \underline{0.412} & \underline{0.410} & \underline{151} & \underline{17.2} & \underline{0.347} & \underline{0.435} & \underline{150} & 18.6 & 0.458 & 0.439 & \underline{161}   \\
\bottomrule[1.5pt]
\multicolumn{13}{l}{\scriptsize \textbf{Bold} denotes the best CM method, \underline{underline} denotes the second best CM method, and $^*$ denotes the best DM method.} \\
\end{tabular}%
}
\end{table*}

\begin{figure}[t!]
    \centering
    \includegraphics[width=\textwidth]{ 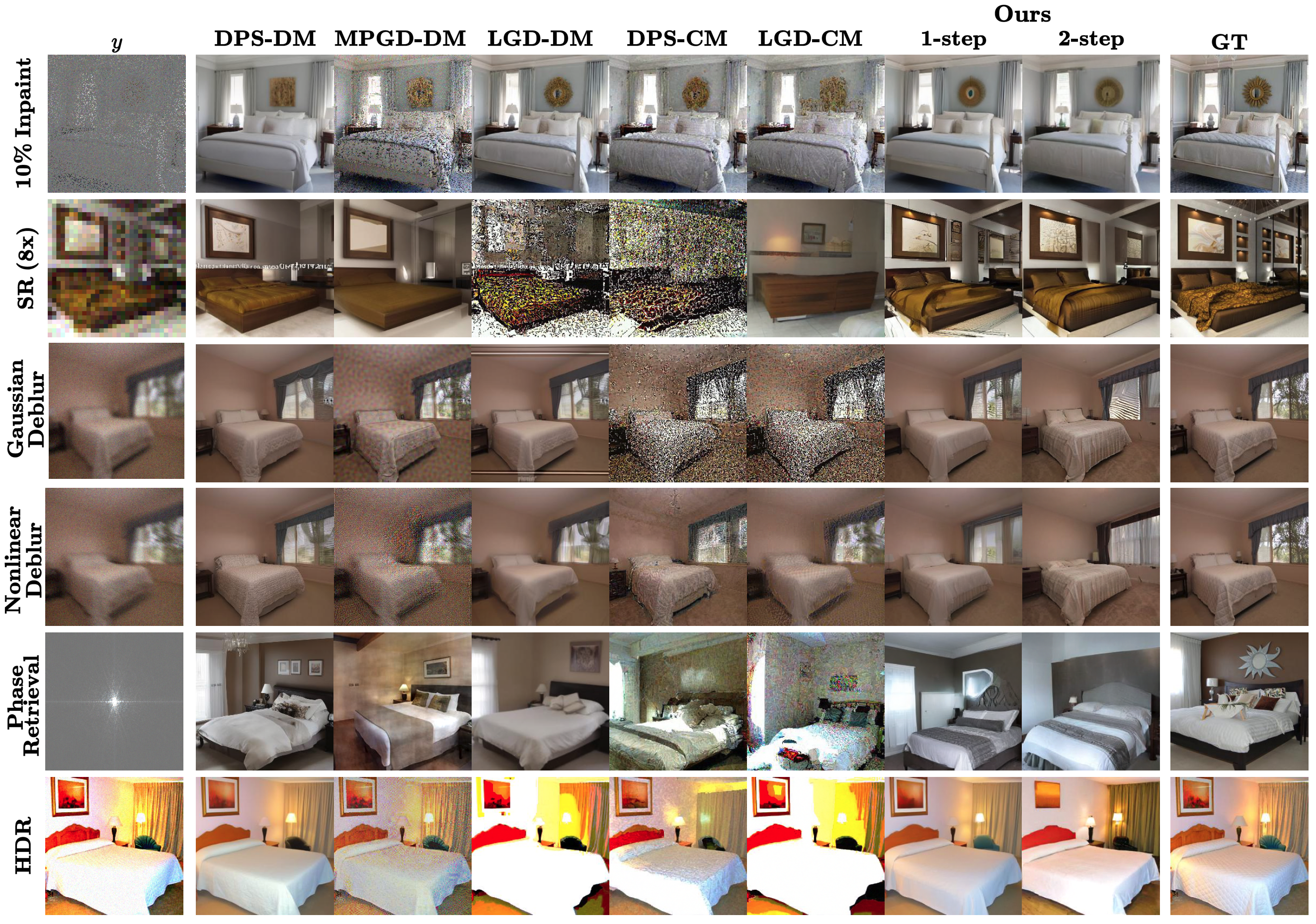} 
    \vspace{-0.3in}
    \caption{Image reconstructions for the linear and nonlinear tasks on LSUN-Bedroom (256 x 256).}
    \label{fig:lsun_fidelity}
\end{figure}

\begin{figure}[t!]
    \centering
    \includegraphics[width=\textwidth]{ 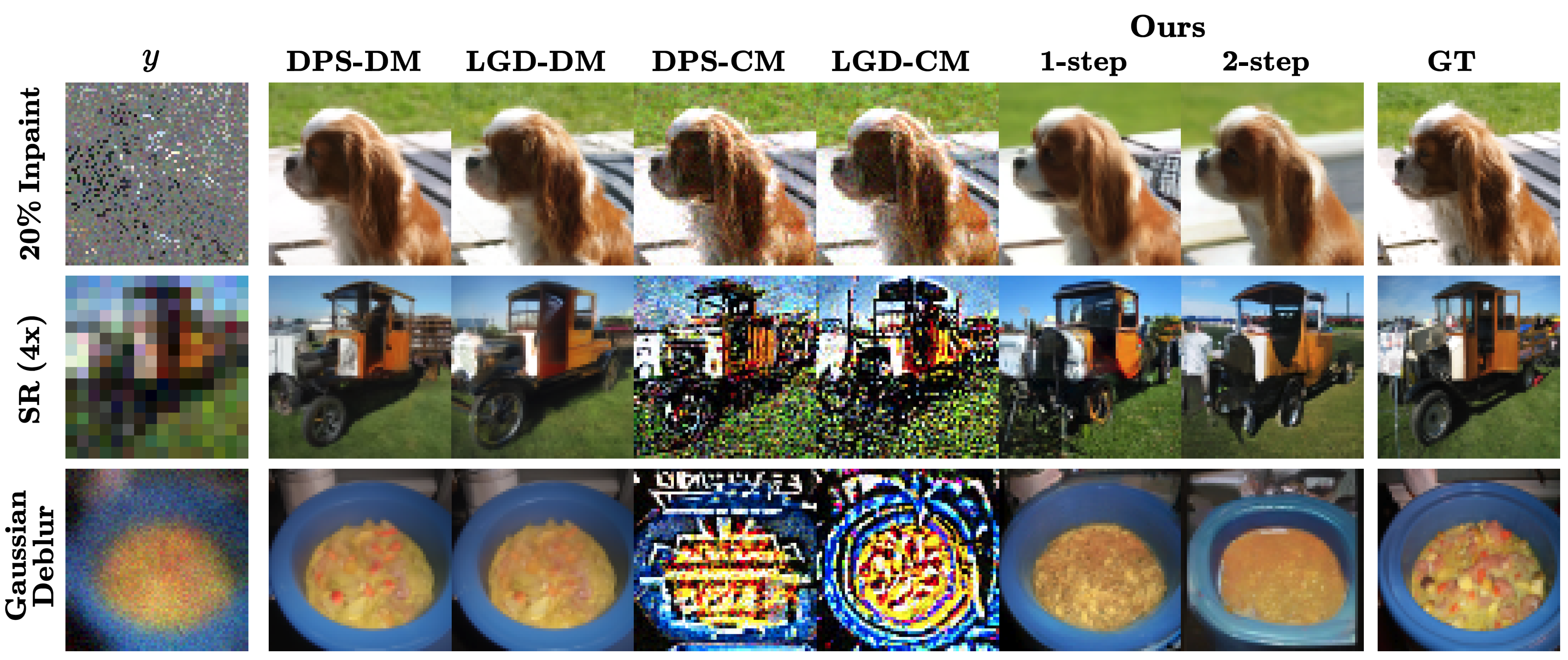} 
    \vspace{-0.3in}
    \caption{Image reconstructions for the linear tasks on ImageNet (64 x 64).}
    \label{fig:imagenet_fidelity}
\end{figure}

\section{Experiments}\label{sec:exp}

\paragraph{Baselines.} We categorize the baselines into two groups. \textbf{(1) DM-based methods:} Diffusion Posterior Sampling (DPS) \cite{chung2023diffusion}, Loss-Guided Diffusion (LGD) \cite{song2023loss}, and Manifold-Preserving Guided Diffusion (MPGD) \cite{he2023manifold}. These methods employ stronger priors compared to our approach, making them inherently stronger baselines and rendering the comparison across different backbones unfair.

To ensure a fairer comparison, we adopt a second set of baselines: \textbf{(2) CM-based methods}, where each DM-based method is adapted to use a consistency model (CM) backbone. Additionally, we include CMEdit, the modified CM sampler from \cite{song2023consistency}, for linear tasks. All DM baselines use the same EDM model from \cite{song2023consistency}, and all CM baselines use the corresponding LPIPS-distilled CM. Details and hyper-parameters for each baseline are outlined in Appendix~\ref{app:baselines_details}.

\paragraph{Datasets.} We include experiments on LSUN-Bedroom (256 x 256)  \cite{yu15lsun} and ImageNet (64 x 64)  \cite{deng2009imagenet}, using 100 validation images for each dataset. All experiments are conducted using the pre-trained CMs from \cite{song2023consistency}, which were distilled using the LPIPS objective from pre-trained EDM models \cite{Karras2022edm}. See Appendix~\ref{app:ours_details} for additional details regarding our method and hyper-parameters. We consider the following linear forward operators for inverse problem tasks: (i) for random mask inpainting, some percentage of the pixels are masked uniformly at random; (ii) for super-resolution, adaptive average pooling is applied; and (iii) for Gaussian deblurring, we use a kernel of $61\times61$ pixels with standard deviation 3.0. We also consider nonlinear tasks: (i) nonlinear deblurring using a neural network forward model \cite{tran2021explore}; (ii) for phase retrieval, the magnitude of the Fourier coefficients is computed; and (iii) for high dynamic range (HDR) reconstruction, pixel values are multiplied by 2 and again truncated to [-1,1]. All experiments apply Gaussian noise with standard deviation $\sigma=0.1$ in the measurement space (except for phase retrieval experiments, which use $\sigma=0.05$). See Appendix~\ref{app:operators_details} for detailed descriptions of the forward operators. Additional experimental results can be found in Appendices~\ref{app:additional_exp} and \ref{app:additional_qualitative}.

\paragraph{Metrics.} To assess reconstruction fidelity, we compare samples from each method using the Peak Signal-to-Noise Ratio (PSNR), Structural Similarity Index Metric (SSIM), Learned Perceptual Image Patch Similarity (LPIPS), and Fr{\'e}chet Inception Distance (FID). To assess the diversity of samples, we consider the following metrics: (i) Diversity Score (DS), which is the ratio between the inter- and intra-cluster distances using 6 nearest neighbors clusters of ResNet-50 features, and (ii) Average CLIP Cosine Similarity (CS), which is the average cosine similarity between CLIP embeddings all sample pairs for a given image.

\subsection{Image restoration results}\label{sec:fidelity_exp}

\paragraph{Linear inverse problems.}
We quantitatively compare the performance of the proposed approach to the baselines for point-estimate image restoration under linear forward models, where 10 samples are provided by each method for 100 images in the validation datasets. LSUN-Bedroom (256 x 256) results are reported in the top section of Table~\ref{tab:linear_fidelity} and our approach is compared to the highest-fidelity baselines on ImageNet (64 x 64) in the bottom section of Table~\ref{tab:linear_fidelity}. Visual comparisons of point estimates are also visualized in the top three rows of Figure~\ref{fig:lsun_fidelity} (for LSUN) and in Figure~\ref{fig:imagenet_fidelity} (for ImageNet). Compared with CM baselines, the proposed approach exhibits superior performance in producing high-fidelity candidate solutions to linear inverse problems. This corresponds to improved visual quality, as other CM approaches produce artifacts and poor reconstructions of the ground truth. The proposed method is also competitive against DM baselines, yielding samples of comparable quality both qualitatively and quantitatively.

\begin{table*}[t!]
\centering
\tabcolsep=0.08cm
\vspace{-1em}
\caption{Quantitative comparison of nonlinear image restoration tasks on LSUN-Bedroom (256 x 256).}
\label{tab:nonlinear_lsun_fidelity}
\resizebox{\textwidth}{!}{%
\begin{tabular}{l|cccc|cccc|cccc}
\toprule[1.5pt]
\multirow{2}{*}{\large Method} 
& \multicolumn{4}{c|}{Nonlinear Deblur}
& \multicolumn{4}{c|}{Phase Retrieval}
& \multicolumn{4}{c}{HDR Reconstruction} \\ \cmidrule{2-13}
                      & PSNR $\uparrow$  & SSIM $\uparrow$  & LPIPS $\downarrow$ & FID $\downarrow$   & PSNR $\uparrow$  & SSIM $\uparrow$  & LPIPS$\downarrow$ & FID $\downarrow$   & PSNR $\uparrow$  & SSIM $\uparrow$  & LPIPS$\downarrow$ & FID $\downarrow$  \\ \midrule[1pt]
DPS-DM      & 21.6 & 0.586 & 0.413 & 75.7$^{*}$ & 10.7 & 0.302 & 0.697$^{*}$ & 90.1 & 21.7$^{*}$ & 0.659$^{*}$ & 0.396$^{*}$ & 69.6$^{*}$   \\
MPGD-DM     & 17.0 & 0.194 & 0.683 & 259 & 9.96 & 0.271 & 0.728 & 118 & 20.5 & 0.586 & 0.408 & 73.2     \\
LGD-DM      & 22.3$^{*}$ & 0.632$^{*}$ & 0.408$^{*}$ & 106 & 10.8$^{*}$ & 0.351$^{*}$ & 0.709 & 82.0$^{*}$ & 12.4 & 0.459 & 0.560 & 172     \\ 
\hline
DPS-CM      & 17.7 & 0.303 & 0.574 & 137 & 10.1 & 0.197 & 0.726 & 195 & 13.5 & 0.405 & 0.597 & 173      \\
MPGD-CM     & 13.1 & 0.100 & 0.762 & 306 & 9.39 & 0.111 & 0.786 & 312 & 11.7 & 0.296 & 0.638 & 223     \\
LGD-CM      & \textbf{21.3} & \underline{0.519} & \underline{0.482} & 163 & 9.36 & 0.113 & 0.767 & 186 & 11.2 & 0.397 & 0.621 & 245      \\ 
\hline
Ours(1-step)     & \underline{20.3} & \textbf{0.566} & \textbf{0.440} & \underline{76.7} & \textbf{10.3} & \textbf{0.315} & \underline{0.709} & \underline{82.9} & \textbf{19.6} & \textbf{0.599} & \textbf{0.436} & \textbf{88.0}   \\
Ours(2-step)     & 18.7 & 0.501 & 0.492 & \textbf{73.3} & \underline{10.2} & \underline{0.309} & \textbf{0.708} & \textbf{81.4} & \underline{16.6} & \underline{0.481} & \underline{0.532} & \underline{101}    \\
\bottomrule[1.5pt]
\multicolumn{13}{l}{\scriptsize \textbf{Bold} denotes the best CM method, \underline{underline} denotes the second best CM method, and $^*$ denotes the best DM method.} \\
\end{tabular}
}
\end{table*}

\begin{figure}[t!]
    \centering
    \includegraphics[width=\textwidth]{ 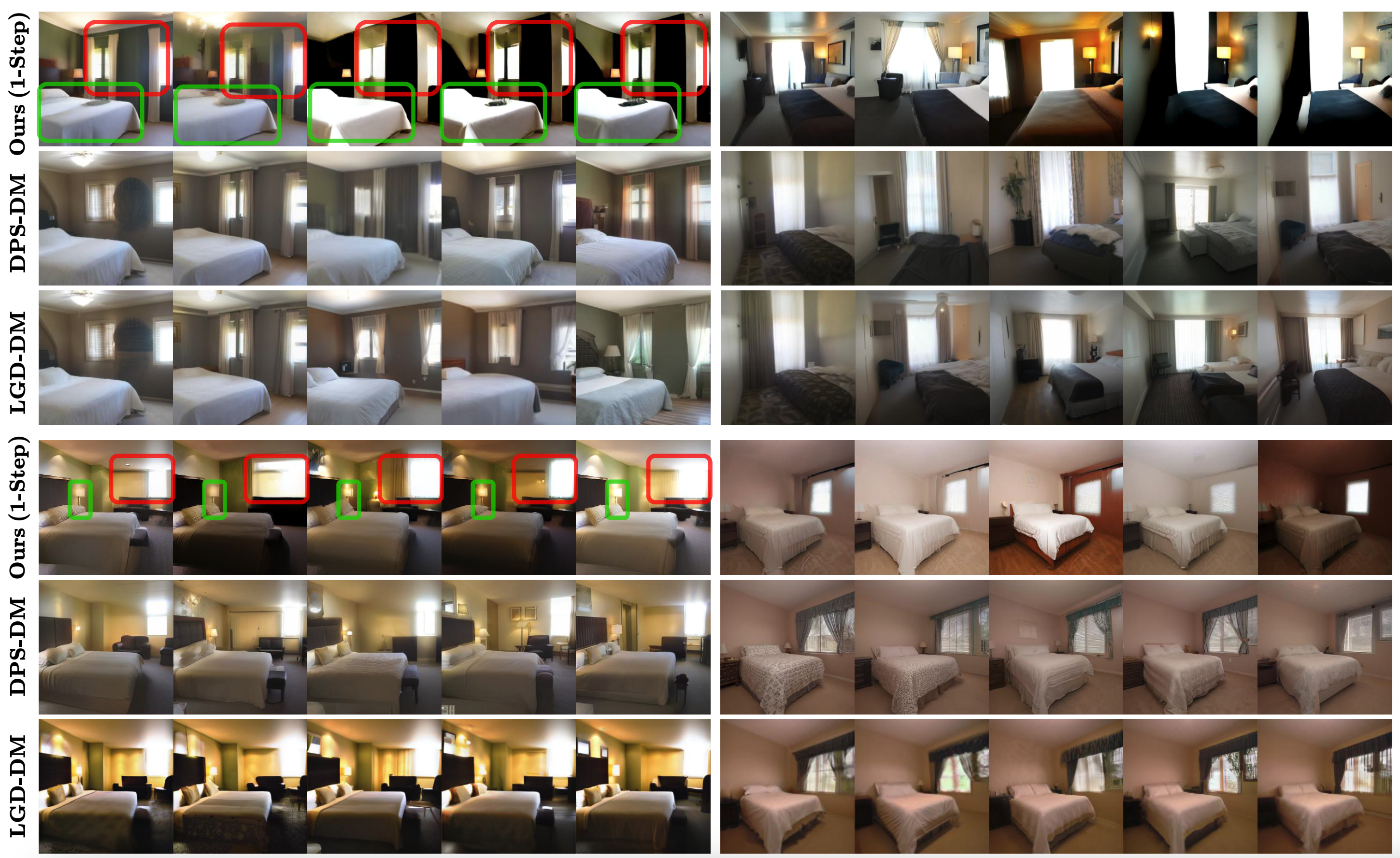} 
    \vspace{-0.2in}
    \caption{Posterior samples for the inpainitng (10\%) (top three rows) and nonlinear deblur (bottom three rows) tasks on LSUN-Bedroom (256 x 256). Green boxes highlight low-uncertainty features and red boxes highlight highly uncertain features.}
    \label{fig:lsun_diversity}
\end{figure}

\paragraph{Nonlinear inverse problems.}
Quantitative comparisons for nonlinear tasks on 100 images from LSUN-Bedroom are displayed in Table~\ref{tab:nonlinear_lsun_fidelity}, where metrics are again computed using 10 samples per image from each method. The proposed method is highly competitive against CM-backbone baselines in all tasks. Moreover, the performance is comparable to that of the DM-backbone baselines. Example reconstructions for each method are visualized in the bottom three rows of Figure~\ref{fig:lsun_fidelity}. Other CM-based methods and MPGD-DM seemingly fail to remove the degradation and noise applied by the forward process, while the proposed method yields samples of visual quality comparable to that of DM baselines. Reconstructions generated using the proposed approach lack the artifacts of CM-backbone baselines while also capturing the fine details present in DM reconstructions. In particular, in the highly degraded and ill-posed phase retrieval task, our method yields samples that are markedly consistent with the ground truth, as PSNR and SSIM values are comparable to those of DM baselines.

\subsection{Diversity of posterior samples}\label{sec:diversity_exp}

To assess the capacity of the proposed approach to generate diverse samples from the posterior, we conduct additional experiments comparing our method to the strongest baselines: DPS and LGD with a DM backbone. For each of the six (linear and nonlinear) tasks, we generate 25 samples for 100 images from the validation partition of LSUN-Bedroom (256 x 256) via each method. A quantitative comparison of the diversity of the samples from each method is shown in Table~\ref{tab:lsun_diversity}. Generally, the proposed approach provides competitive to superior performance in diversity metrics compared to DM baselines. Furthermore, visualizing a subset of the posterior samples in the inpainting (top three rows) and nonlinear deblurring (bottom three rows) tasks in Figure~\ref{fig:lsun_diversity}, one can observe that samples from our method have more clear visual diversity. High-level features of the scene, such as overall lighting or shading, are more variable across our posterior samples. Moreover, our method can identify certain and uncertain semantic features in the candidate reconstructions, as particular features such as windows and lamps have dramatic qualitative variation across the posterior samples from our approach.

\begin{table*}[t!]
\centering
\tabcolsep=0.08cm
\caption{Quantitative comparison of diversity metrics on linear and non-linear image restoration tasks on LSUN-Bedroom (256 x 256).}
\label{tab:lsun_diversity}
\resizebox{\textwidth}{!}{%
\begin{tabular}{l|cc|cc|cc|cc|cc|cc}
\toprule[1.5pt]
\multirow{2}{*}{\large Method} 
& \multicolumn{2}{c|}{SR(8x)}
& \multicolumn{2}{c|}{Gaussian Deblur}
& \multicolumn{2}{c|}{10\% Inpainting} 
& \multicolumn{2}{c|}{Nonlinear Deblur}
& \multicolumn{2}{c|}{Phase Retrieval}
& \multicolumn{2}{c}{HDR Reconstruction}

\\ \cmidrule{2-13}
                      & DS $\uparrow$  & CS $\downarrow$   & DS $\uparrow$  & CS $\downarrow$  & DS $\uparrow$  & CS $\downarrow$ & DS $\uparrow$  & CS $\downarrow$   & DS $\uparrow$  & CS $\downarrow$  & DS $\uparrow$  & CS $\downarrow$ \\ \midrule[1pt]
DPS-DM      &2.14  &\textbf{0.843}  &2.10  &0.938  &2.33  &0.876 & 2.22 & 0.924 & 2.42 & \textbf{0.809} & 2.25 & \textbf{0.873}  \\
LGD-DM      &2.35  &0.881  &2.19  &\underline{0.925}  &2.28  &0.872 & 2.11 & 0.923 & 2.36 & \underline{0.815} & \underline{3.14} & 0.914  \\
\hline
Ours(1-step)     &\textbf{3.01}  &\underline{0.879}  &\textbf{3.26} &0.997  &\textbf{3.15}  &\underline{0.869} & \textbf{2.80} & \underline{0.912} & \textbf{3.08} & 0.914 & 3.09 & 0.927  \\
Ours(2-step)     &\underline{2.67}  &0.919  &\underline{2.62}  &\textbf{0.866}  &\underline{2.48}  &\textbf{0.864} & \underline{2.69} & \textbf{0.885} & \underline{2.89} & 0.862 & \textbf{3.23} & \underline{0.904}  \\
\bottomrule[1.5pt]
\multicolumn{13}{l}{\scriptsize \textbf{Bold} denotes the best method, \underline{underline} denotes the second best method.} \\
\end{tabular}%
}
\end{table*}

\section{Related works}

\paragraph{Posterior sampling with generative models.}
Diffusion-based inverse problem solvers consist of task-specific frameworks \cite{saharia2022image,li2022srdiff,lugmayr2022repaint}, optimized approaches \cite{saharia2022palette,shi2022conditional,liu2023sb}, and training-free techniques leveraging pre-trained diffusion priors \cite{kawar2021snips,kawar2022denoising,chung2022come,chung2022improving,wang2022zero,chung2023diffusion,song2023pseudoinverse,song2023loss,he2023manifold,dou2024diffusion}. Early training-free methods for solving inverse problems utilize measurement-space projection \cite{song2021denoising,choi2021ilvr}, while others addressed noisy problems via consistency in the spectral domain \cite{kawar2021snips,kawar2022denoising,wang2022zero} or using manifold constraints \cite{chung2022improving,he2023manifold}. Recent works consider general noisy and nonlinear inverse problems using an approximation of the measurement likelihood in each generation step \cite{chung2023diffusion,song2023pseudoinverse,song2023loss}. An emerging area of interest focuses on developing diffusion posterior sampling techniques with provable guarantees \cite{xu2024provably,bruna2024posterior}. For instance, \cite{xu2024provably} develop an alternating measurement projection/guided diffusion approach for which they provide asymptotic convergence guarantees, while \cite{bruna2024posterior} utilize tilted transport in linear inverse problems which provably samples the posterior under certain conditions. Diffusion-base posterior sampling works can also be adapted to flow-based models, e.g., \cite{pokle2023training} adapt $\Pi$GDM \cite{song2023pseudoinverse} to CNFs. These existing works modify the sampling trajectory of generative priors, requiring repeated simulation of the entire sampling process to produce multiple posterior samples, hindering scalability to many samples. The proposed sampling in the noise space of one- or few-step mappings enables the efficient generation of many posterior samples.

\paragraph{Guided generation via noise space iteration.}
For generative models that provide deterministic mappings between a latent noise space and data, such as GANs \cite{goodfellow2014generative}, flows \cite{chen2018neural}, and CMs \cite{song2023consistency}, optimization of noise can guide generation towards conditional information \cite{bojanowski2017optimizing,galatolo2021generating,patashnik2021styleclip,asim2020invertible,whang2021solving,ben2024dflow}. In the GAN literature, this is primarily addressed using text-to-image guided synthesis \cite{galatolo2021generating,patashnik2021styleclip} or task-specific objectives \cite{bojanowski2017optimizing}. This type of approach has also been used to solve inverse problems using flow-based models \cite{asim2020invertible,whang2021solving}; for instance, D-Flow \cite{ben2024dflow} optimizes with respect to the noise input to CNFs. Our method also iterates in the noise space, simulating Langevin dynamics for posterior sampling instead of optimizing to yield a point estimate. Computing gradients through CNFs is expensive \cite{chen2018neural}, requiring at least tens of function evaluation per ODE solution \cite{lu2022dpm,dockhorn2022genie}. The use of CMs in our approach facilitates computation of the gradient in as few as one call to the neural network, enabling the progressive accumulation of posterior samples during Langevin dynamics simulation.

\section{Discussion}

We have outlined a novel approach for posterior sampling via Langevin dynamics in the noise space of a generative model. Using a CM mapping from noise to data, our posterior sampling provides solutions to general noisy image inverse problems, demonstrating superior reconstruction fidelity to other CM methods and competitiveness with diffusion baselines. A primary limitation of our approach is the low visual quality in some posterior samples. Fidelity drawbacks can be attributed to a relatively poor approximation of the prior by CMs. Future work will focus on improving fidelity of diverse samples, perhaps by using more accurate prior models and adaptive simulation of the SDE. Regardless, our method produces highly diverse samples, representing meaningful semantic uncertainty of data features within the posterior.

\section*{Acknowledgments}
MR, YX, and XC were partially supported by National Science Foundation (NSF) DMS-2134037.
YX was also partially supported by 
NSF DMS-2220495.
XC was also partially supported by 
NSF DMS-2237842
and Simons Foundation (grant ID: 814643).
JL was partially supported by NSF DMS-2309378.

\bibliography{inverse.bib}
\bibliographystyle{plain}

\appendix

\setcounter{table}{0}
\setcounter{figure}{0}
\setcounter{algorithm}{0}
\renewcommand{\thetable}{A.\arabic{table}}
\renewcommand{\thefigure}{A.\arabic{figure}}
\renewcommand{\thealgorithm}{A.\arabic{algorithm}}
\renewcommand{\thelemma}{A.\arabic{lemma}}

\setcounter{equation}{0} 
\renewcommand{\theequation}{A.\arabic{equation}}

\section{Proofs}\label{app:proof}

\begin{lemma}\label{lemma:x1-sde-equilibrium}
The equilibrium distribution of SDE \eqref{eq:SDE-x1} is $\tilde p_{1,y}$.
\end{lemma}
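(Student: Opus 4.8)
The plan is to recognize \eqref{eq:SDE-x1} as an overdamped Langevin diffusion and to read off its stationary (Gibbs) measure. Writing the SDE in the canonical form $dx_1 = -\nabla U(x_1)\,dt + \sqrt{2}\,dW_t$, I would first exhibit the potential
\[
U(x_1) = \tfrac{1}{2}\norm{x_1}^2 + L_y(\Phi(x_1)),
\]
and verify that its gradient reproduces the drift, $\nabla U(x_1) = x_1 + \nabla_{x_1} L_y(\Phi(x_1))$. The first term comes from $\nabla \tfrac12\norm{x_1}^2 = x_1$, and the second is precisely the gradient appearing in \eqref{eq:SDE-x1}, which is well defined since $\Phi$ is differentiable and $L_y$ is differentiable in $x_0$. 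This exhibits the drift as a gradient field whose diffusion coefficient is matched to the $\sqrt 2$ noise amplitude, i.e. exactly the structure for which $\propto e^{-U}$ is stationary.

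Second, I would compute the candidate equilibrium density explicitly and match it to \eqref{eq:approx-post-x1}. Since $\gamma(x_1) \propto \exp(-\norm{x_1}^2/2)$ and $\log p(y\mid\Phi(x_1)) = -L_y(\Phi(x_1))$,
\[
e^{-U(x_1)} = \exp\!\Big(-\tfrac12\norm{x_1}^2\Big)\,\exp\!\big(-L_y(\Phi(x_1))\big) \;\propto\; \gamma(x_1)\, p(y\mid\Phi(x_1)),
\]
which is exactly $\tilde p_{1,y}$. Thus the claimed equilibrium is the Gibbs measure associated to $U$.

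Third, to make stationarity rigorous rather than merely invoke the standard Langevin fact, I would check it at the level of the Fokker--Planck (forward Kolmogorov) equation. For $dx_1 = b(x_1)\,dt + \sqrt 2\,dW_t$ with $b = -\nabla U$, the stationary equation reads $\nabla\cdot\big(-b\,\rho + \nabla\rho\big) = 0$, and it suffices to show that the probability flux $J := -b\,\rho + \nabla\rho$ vanishes for $\rho = \tilde p_{1,y} \propto e^{-U}$. Substituting $\nabla\rho = -\rho\,\nabla U = \rho\,b$ yields $J = -b\rho + \rho b = 0$ pointwise, so $\partial_t \rho = \nabla\cdot J = 0$ and $\tilde p_{1,y}$ is invariant.

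The computation is routine; the only point that needs care is confirming that $\tilde p_{1,y}$ is a genuine normalizable density, i.e. $Z = \int e^{-U(x_1)}\,dx_1 < \infty$. This follows from the Gaussian confinement term $\tfrac12\norm{x_1}^2$ together with boundedness of the likelihood, since any conditional density satisfies $p(y\mid\Phi(x_1)) \le \sup_{x_0} p(y\mid x_0)$, giving the integrable bound $e^{-U} \le (\sup_{x_0} p(y\mid x_0))\,e^{-\norm{x_1}^2/2}$. The same quadratic confinement makes $U$ coercive, which is what justifies calling $\tilde p_{1,y}$ the \emph{equilibrium} (the unique invariant measure toward which the dynamics is ergodic) rather than merely a stationary solution.
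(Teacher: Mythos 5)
Your proof is correct and follows essentially the same route as the paper's: identify the drift of \eqref{eq:SDE-x1} as $-\nabla U$ with $U(x_1)=\tfrac12\|x_1\|^2+L_y(\Phi(x_1))$ and match the Gibbs density $e^{-U}$ to $\tilde p_{1,y}$ from \eqref{eq:approx-post-x1}. Your additional steps---the explicit Fokker--Planck flux computation and the normalizability check via the Gaussian confinement---merely flesh out the ``generic condition'' that the paper invokes without proof, which is a welcome but not substantively different elaboration.
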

\begin{proof}[Proof of Lemma \ref{lemma:x1-sde-equilibrium}]
Under generic condition, the Langevin dynamics 
\[
dX_t = - \nabla U(X_t) dt + \sqrt{2} dW_t
\]
have the equilibrium $\rho_\infty \propto e^{-U}$. 
For $\tilde p_{1,y}$ in \eqref{eq:approx-post-x1} to be the equilibrium, it suffices to verify that 
\[
 \nabla \log \tilde p_{1,y}
 = - (  x_1 + \nabla_{ x_1} L_y ( \Phi(x_1) ) ).
\]
This follows by that $\log \gamma(x_1) = - \|x_1\|^2/2 + c$ and 
$\log   p( y | \Phi(x_1)) = - L_y (\Phi(x_1))$. 
\end{proof}

\begin{proof}[Proof of Theorem \ref{thm:post-tv}]
By \eqref{eq:true-post-x0} and \eqref{eq:model-post-x0}, we have 
\[
p_{0,y}(x_0) = \frac{1}{Z_y} p(y|x_0) p_{\rm data}(x_0),
\quad 
\tilde p_{0,y}(x_0) = \frac{1}{\tilde Z_y} p(y|x_0) \Phi_\# \gamma(x_0),
\]
where
\[
Z_y := \int p(y|x_0) p_{\rm data}(x_0) dx_0,
\quad 
\tilde Z_y := \int p(y|x_0) \Phi_\# \gamma(x_0) dx_0.
\]
Then, we have 
\begin{align}
2 \TV( p_{0,y}, \tilde p_{0,y} )
& = \int |p_{0,y}(x_0) - \tilde p_{0,y}(x_0)| dx_0 \nonumber \\
& \le \int \frac{1}{Z_y} p(y|x_0) 
    \left| p_{\rm data}(x_0)- \Phi_\# \gamma(x_0) \right| dx_0
+ \left| \frac{\tilde Z_y - Z_y}{Z_y}\right|.
\label{eq:tv-bound-proof-1}
\end{align}
By definition of $\kappa_y $ in \eqref{eq:def-kappay}, we have 
$\frac{1}{Z_y} p(y|x_0)  \le \kappa_y$,  $\forall x_0$, 
and thus
\[
\int \frac{1}{Z_y} p(y|x_0) 
    \left| p_{\rm data}(x_0)- \Phi_\# \gamma(x_0) \right| dx_0
\le \kappa_y 
    \int
    \left| p_{\rm data}(x_0)- \Phi_\# \gamma(x_0) \right| dx_0.
\]
Meanwhile,
$\tilde Z_y - Z_y
=  \int  p(y|x_0) 
    ( \Phi_\# \gamma(x_0)- p_{\rm data}(x_0)  ) dx_0,
$
and then 
\begin{align*}
\frac{ | \tilde Z_y - Z_y| }{Z_y}
& \le \int \frac{1}{Z_y} p(y|x_0) 
    | \Phi_\# \gamma(x_0)- p_{\rm data}(x_0)  | dx_0 \\
& \le   \int \kappa_y 
    | \Phi_\# \gamma(x_0)- p_{\rm data}(x_0)  | dx_0.  
\end{align*}
Putting back to \eqref{eq:tv-bound-proof-1}, we have
\[
2 \TV( p_{0,y}, \tilde p_{0,y} )
\le 2 \kappa_y 
    \int
    \left| p_{\rm data}(x_0)- \Phi_\# \gamma(x_0) \right| dx_0
= 4 \kappa_y \TV( p_{\rm data}, \Phi_\# \gamma),
\]
which proves the theorem under \eqref{eq:TV-prior}. 
\end{proof}

\begin{proof}[Proof of Lemma \ref{lemma:sampling-error-dpi}]
By that $\tilde p_{0,y} = \Phi_\# \tilde p_{1,y}$,
$\tilde p_{0,y}^S = \Phi_\# \tilde p_{1,y}^S$, and Data Processing Inequality.
\end{proof}
\begin{proof}[Proof of Corollary \ref{cor:sampled-tv}]
    By Theorem \ref{thm:post-tv}, Lemma \ref{lemma:sampling-error-dpi}, and triangle inequality since TV is half of the $L^1$ norm between two densities.  
\end{proof}

\section{Experimental details}\label{app:exp_details}

\subsection{Details of the proposed approach}\label{app:ours_details}

\paragraph{Consistency model generative process.}
To represent the map $\Phi$ from noise space to data space, we utilize the pre-trained CMs of \cite{song2023consistency} with a 1- or 2-step sampler. For the 2-step sampler, we use standard multistep consistency sampling (Algorithm 1, \cite{song2023consistency}), i.e.,
\begin{equation*}
    x_0 = f_\theta\left( f_\theta(x_T, T) + \sqrt{t^2 - \epsilon^2} z, t \right),
\end{equation*}
where $f_\theta$ is the pre-trained CM, $x_T \leftarrow x_1$, $T=80$, $\epsilon=2\times10^{-3}$ is a small noise offset , and $t$ is an intermediate "time step" along the PF-ODE trajectory (the "halfway" point). In \cite{song2023consistency}, $z$ is sampled from the standard Gaussian for each call to $\Phi$. In this work, we sample $z$ once and fix it for all future calls to $\Phi$, which we observe to empirically improve performance. 

\paragraph{Warm-start initialization and sampling.}
The posterior sampling process begins with a warm-start initialization consisting of $K$ steps of Adam optimization with learning rate, $\beta_1$, and $\beta_2$ for each experiment outlined in Tables~\ref{tab:lsun_fidelity_hyperparams}, \ref{tab:imagenet_fidelity_hyperparams}, and \ref{tab:lsun_diversity_hyperparams}. This is followed by $N$ steps of Langevin dynamics simulation (via EM discretization in the main-text experiments) using step size $\tau$. The NFEs per sample can be computed as $\eta(K+N)/N$, where $\eta$ is the number of steps used for CM generation. All experiments are implemented in PyTorch and are run on a system with NVIDIA A100 GPUs. 

See below for a pseudo-code implementation of one iteration of our sampling procedure:
\lstinputlisting[language=python]{pseudo_code.py}

\begin{table*}[h!]
\centering
\tabcolsep=0.08cm
\caption{Hyper-parameters for linear and nonlinear image restoration tasks on LSUN-Bedroom (256 x 256).}
\label{tab:lsun_fidelity_hyperparams}
\resizebox{\textwidth}{!}{%
\begin{tabular}{l|c|c|c|c|c|c}
\toprule[1.5pt]
{Method} 
& {8x Super-resolution}
& {Gaussian Deblur}
& {10$\%$ Inpainting}
& {Nonlinear Deblur}
& {Phase Retrieval}
& {HDR Reconstruction} \\
 \midrule[1pt]
DPS-DM      & $\zeta=25, N=100$ & $\zeta=7, N=100$ & $\zeta=25, N=100$ & $\zeta=15, N=100$ & $\zeta=10, N=100$ & $\zeta=5, N=100$   \\
MPGD-DM     & $\zeta=25, N=100$ & $\zeta=15, N=100$ & $\zeta=25, N=100$ & $\zeta=7, N=100$ & $\zeta=1, N=100$ & $\zeta=5, N=100$    \\
LGD-DM      & $\zeta=25, M=1, N=100$ & $\zeta=25, M=10, N=100$ & $\zeta=7, M=25, N=100$  & $\zeta=9, M=10, N=100$ & $\zeta=1, M=10, N=100$ & $\zeta=30, M=10, N=100$    \\
\hline
DPS-CM      & $\zeta=25, N=100$ & $\zeta=7, N=100$ & $\zeta=25, N=100$ & $\zeta=8, N=100$ & $\zeta=9, N=100$ & $\zeta=4, N=100$    \\
MPGD-CM     & N/A & N/A & N/A & $\zeta=15, N=100$ & $\zeta=3, N=100$ & $\zeta=30, N=100$    \\
LGD-CM      & $\zeta=25, M=1, N=100$ & $\zeta=7, M=1, N=100$ & $\zeta=5, M=1, N=100$ & $\zeta=15, M=10, N=100$ & $\zeta=0.5, M=10, N=100$ & $\zeta=15, M=10, N=100$    \\
\hline
\multirow{3}{*}{Ours(1-step)}     & Adam: $K=800, {\rm lr}=5\times10^{-3}$ & Adam: $K=800, {\rm lr}=5\times10^{-3}$ & $K=800, {\rm lr}=5\times10^{-3}$ & Adam: $K=800, {\rm lr}=5\times10^{-3}$ & Adam: $K=200, {\rm lr}=1\times10^{-3}$ & $K=800, {\rm lr}=5\times10^{-3}$    \\
& $\beta_1=0.9, \beta_2=0.999$ & $\beta_1=0.9, \beta_2=0.999$ & $\beta_1=0.9, \beta_2=0.999$  & $\beta_1=0.9, \beta_2=0.999$ & $\beta_1=0.9, \beta_2=0.999$ & $\beta_1=0.9, \beta_2=0.999$    \\
& EM: $N=10, \tau=1\times10^{-5}$ & EM: $N=10, \tau=1\times10^{-6}$ & EM: $N=10, \tau= 1\times10^{-5}$ & EM: $N=10, \tau=5\times10^{-6}$ & EM: $N=10, \tau=1\times10^{-6}$ & EM: $N=10, \tau=1\times10^{-6}$    \\
\hline
\multirow{3}{*}{Ours(2-step)}     & Adam: $K=800, {\rm lr}=5\times10^{-3}$ & Adam: $K=800, {\rm lr}=5\times10^{-3}$ & Adam: $K=800, {\rm lr}=5\times10^{-3}$ & Adam: $K=500, {\rm lr}=5\times10^{-3}$ & Adam: $K=500, {\rm lr}=1\times10^{-3}$ & Adam: $K=500, {\rm lr}=5\times10^{-3}$    \\
& $\beta_1=0.9, \beta_2=0.999$ & $\beta_1=0.9, \beta_2=0.999$ & $\beta_1=0.9, \beta_2=0.999$ & $\beta_1=0.9, \beta_2=0.999$ & $\beta_1=0.9, \beta_2=0.999$ & $\beta_1=0.9, \beta_2=0.999$    \\
& EM: $N=10, \tau= 1\times10^{-5}$ & EM: $N=10, \tau= 1\times10^{-7}$ & EM: $N=10, \tau=1\times10^{-5}$ & EM: $N=10, \tau=5\times10^{-6}$ & EM: $N=10, \tau=1\times10^{-6}$ & EM: $N=10, \tau=1\times10^{-6}$    \\
\bottomrule[1.5pt]
\end{tabular}%
}
\vspace{-1em}
\end{table*}

\begin{table*}[h!]
\centering
\tabcolsep=0.08cm
\caption{Hyper-parameters for linear image restoration tasks on ImageNet (64 x 64).}
\label{tab:imagenet_fidelity_hyperparams}
\resizebox{0.65\textwidth}{!}{%
\begin{tabular}{l|c|c|c}
\toprule[1.5pt]
{Method} 
& {4x Super-resolution}
& {Gaussian Deblur}
& {20$\%$ Inpainting} \\
 \midrule[1pt]
DPS-DM      & $\zeta=20, N=100$ & $\zeta=15, N=100$ & $\zeta=30, N=100$    \\
LGD-DM      & $\zeta=3, M=10, N=100$ & $\zeta=1, M=10, N=100$ & $\zeta=5, M=10, N=100$    \\
\hline
DPS-CM      & $\zeta=30, N=100$ & $\zeta=30, N=100$ & $\zeta=25, N=100$    \\
LGD-CM      & $\zeta=3, M=10, N=100$ & $\zeta=7, M=10, N=100$ & $\zeta=6, M=10, N=100$    \\
\hline
\multirow{3}{*}{Ours(1-step)}     & Adam: $K=800, {\rm lr}=1\times10^{-2}$ & Adam: $K=800, {\rm lr}=1\times10^{-2}$ & $K=800, {\rm lr}=1\times10^{-2}$    \\
& $\beta_1=0.9, \beta_2=0.999$ & $\beta_1=0.9, \beta_2=0.999$ & $\beta_1=0.9, \beta_2=0.999$   \\
& EM: $N=10, \tau=5\times10^{-4}$ & EM: $N=10, \tau=3\times10^{-5}$ & EM: $N=10, \tau=1\times10^{-4}$    \\
\hline
\multirow{3}{*}{Ours(2-step)}     & Adam: $K=500, {\rm lr}=5\times10^{-2}$ & Adam: $K=500, {\rm lr}=5\times10^{-2}$ & Adam: $K=500, {\rm lr}=5\times10^{-2}$    \\
& $\beta_1=0.9, \beta_2=0.999$ & $\beta_1=0.9, \beta_2=0.999$ & $\beta_1=0.9, \beta_2=0.999$   \\
& EM: $N=10, \tau=1\times10^{-4}$ & EM: $N=10, \tau=3\times10^{-5}$ & EM: $N=10, \tau=1\times10^{-4}$    \\
\bottomrule[1.5pt]
\end{tabular}%
}
\vspace{-1em}
\end{table*}

\begin{table*}[h!]
\centering
\tabcolsep=0.08cm
\caption{Hyper-parameters for linear and nonlinear diversity experiments on LSUN-Bedroom (256 x 256).}
\label{tab:lsun_diversity_hyperparams}
\resizebox{\textwidth}{!}{%
\begin{tabular}{l|c|c|c|c|c|c}
\toprule[1.5pt]
{Method} 
& {8x Super-resolution}
& {Gaussian Deblur}
& {10$\%$ Inpainting} 
& {Nonlinear Deblur}
& {Phase Retrieval}
& {HDR Reconstruction} \\
 \midrule[1pt]
DPS-DM      & $\zeta=7, N=100$ & $\zeta=7, N=100$ & $\zeta=7, N=100$ & $\zeta=5, N=100$ & $\zeta=5, N=100$ & $\zeta=1, N=100$    \\
LGD-DM      & $\zeta=15, M=1, N=100$ & $\zeta=5, M=1, N=100$ & $\zeta=15, M=1, N=100$ & $\zeta=4, M=10, N=100$ & $\zeta=0.5, M=10, N=100$ & $\zeta=10, M=10, N=100$    \\
\hline
\multirow{3}{*}{Ours(1-step)}     & Adam: $K=400, {\rm lr}=5\times10^{-3}$ & Adam: $K=600, {\rm lr}=5\times10^{-3}$ & $K=600, {\rm lr}=5\times10^{-3}$ & Adam: $K=800, {\rm lr}=5\times10^{-3}$ & Adam: $K=200, {\rm lr}=1\times10^{-3}$ & $K=800, {\rm lr}=5\times10^{-3}$    \\
& $\beta_1=0.9, \beta_2=0.999$ & $\beta_1=0.9, \beta_2=0.999$ & $\beta_1=0.9, \beta_2=0.999$ & $\beta_1=0.9, \beta_2=0.999$ & $\beta_1=0.9, \beta_2=0.999$ & $\beta_1=0.9, \beta_2=0.999$    \\
& EM: $N=10, \tau=4\times10^{-4}$ & EM: $N=10, \tau=1\times10^{-6}$ & EM: $N=10, \tau=1\times10^{-4}$ & EM: $N=25, \tau=7.5\times10^{-6}$ & EM: $N=25, \tau=3\times10^{-6}$ & EM: $N=25, \tau=3\times10^{-6}$    \\
\hline
\multirow{3}{*}{Ours(2-step)}     & Adam: $K=600, {\rm lr}=5\times10^{-3}$ & Adam: $K=600, {\rm lr}=5\times10^{-3}$ & Adam: $K=800, {\rm lr}=5 \times 10^{-3}$ & Adam: $K=500, {\rm lr}=5\times10^{-3}$ & Adam: $K=500, {\rm lr}=1\times10^{-3}$ & Adam: $K=500, {\rm lr}=5\times10^{-3}$    \\
& $\beta_1=0.9, \beta_2=0.999$ & $\beta_1=0.9, \beta_2=0.999$ & $\beta_1=0.9, \beta_2=0.999$ & $\beta_1=0.9, \beta_2=0.999$ & $\beta_1=0.9, \beta_2=0.999$ & $\beta_1=0.9, \beta_2=0.999$    \\
& EM: $N=10, \tau=4\times10^{-4}$ & EM: $N=10, \tau=1\times10^{-5}$ & EM: $N=10, \tau=1\times10^{-4}$ & EM: $N=25, \tau=7.5\times10^{-6}$ & EM: $N=25, \tau=3\times10^{-6}$ & EM: $N=25, \tau=3\times10^{-6}$    \\
\bottomrule[1.5pt]
\end{tabular}%
}
\vspace{-1em}
\end{table*}

\subsection{Details of the baselines}\label{app:baselines_details}
The baseline methods conduct $t=1,\dots,N$ Euler steps for sampling. All methods require a denoiser to provide $x_0\approx\hat{x}_0(x_t)$ at each sampling step $t$, which is achieved using either a pre-trained EDM \cite{Karras2022edm} or CM \cite{song2023consistency}, both obtained from \cite{song2023consistency} for each dataset.

\paragraph{Diffusion Posterior Sampling (DPS).}
DPS \cite{chung2023diffusion} utilizes the denoiser corresponding to a pre-trained DM to approximate the measurement likelihood gradient at each step of DM sampling. At each state $x_t$ along the diffusion sampling trajectory, a score-base diffusion model can provide a predicted $\hat{x}_0(x_t)$, which can be used to compute $\nabla_{x_t} p(y | \hat{x}_0)$ via differentiation through the score-based model. In DPS, each step of diffusion sampling is adjusted by this gradient with weight $\zeta$, i.e., $x_{t-1} \leftarrow x_{t-1} - \zeta \nabla_{x_t}  p(y | \hat{x}_0)$. 

\paragraph{Manifold Preserving Guided Diffusion (MPGD).}
MPGD \cite{he2023iterative} computes the gradient of the measurement likelihood in the denoised space rather than with respect to $x_t$ at each step, taking a gradient step in $\hat{x}_0$ before updating the diffusion iterate. That is, MPGD conducts the update $\hat{x}_0 \leftarrow \hat{x}_0(x_t) - \zeta \nabla_{\hat{x}_0}  p(y | \hat{x}_0(x_t))$, which can then be use to yield $x_{t-1}$ at each step. MPGD also provides an optional manifold projection step which utilizes pre-trained autoencoders to ensure $\hat{x}_0$ remains on the data manifold. For a fair comparison, we only consider MPGD without manifold projection in this work.

\paragraph{Loss Guided Diffusion (LGD).}
LGD \cite{song2023loss} aims to improve the approximation of $p(y | x_0)$ at each step along the sampling trajectory via a Monte Carlo approach. Viewing $p(y | \hat{x}_0)$ in DPS as a delta distribution approximation of $p(y | x_0)$ about $\hat{x}_0$, LGD instead computes the log-mean-exponential of $p(y | \hat{x}^{(m)}_0)$ for $m=1,\dots,M$ perturbed copies of $\hat{x}_0$. That is, $p(\hat{x}_0 | x_t)\sim \calN(\hat{x}_0(x_t), r_t^2I)$, where $r_t = \beta_t / \sqrt{1 + \beta_t^2}$. The weighted (by $\zeta$) Monte Carlo gradient $\nabla_{x_t} \log \left( \frac{1}{M} \sum_{m=1}^M \exp \left( p\left( y|\hat{x}^{(m)}_0 \right) \right) \right)$ is then used to adjust $x_{t-1}$, as in DPS.

\subsection{Degradations and forward operators}\label{app:operators_details}

In all experiments, pixel values are scaled from [-1, 1] (as in \cite{song2023consistency}) before application of forward operators. The details of the measurement likelihoods corresponding to each forward operator are outlined below. All methods use $\sigma=0.1$, except for phase retrieval, which uses $\sigma=0.05$.

\paragraph{Super-resolution.}
The super-resolution task is defined by the following measurement likelihood:
\begin{equation*}
    y \sim \calN(y | {\rm AvgPool}_f(x), \sigma^2 I),
\end{equation*}
where ${\rm AvgPool}$ represents 2D average pooling by a factor $f$.

\paragraph{Gaussian deblur.}
Gaussian blur is defined by a block Hankel matrix $C^\psi$ representing convolution of $x$ with kernel $\psi$:
\begin{equation*}
    y \sim \calN(y | C^\psi x, \sigma^2 I).
\end{equation*}
We consider a 61 x 61 Gaussian kernel with standard deviation of 3.0, as in \cite{chung2023diffusion}.

\paragraph{Inpainting.}
The measurement likelihood corresponding to $p\%$ inpainting is a function of a mask $P$ with (1-$p$)\% uniformly random 0 values:
\begin{equation*}
    y \sim \calN(y | Px, \sigma^2 I).
\end{equation*}

\paragraph{Nonlinear deblur.}
Following \cite{chung2023diffusion}, the forward nonlinear blur operator is a pre-trained neural network $\mathcal{F}_\phi$ to approximate the integration of non-blurry images over a short time frame given a single sharp image \cite{tran2021explore}. Therefore, the measurement likelihood is as follows:
\begin{equation*}
    y \sim \calN(y | \mathcal{F}_\phi(x), \sigma^2 I).
\end{equation*}

\paragraph{Phase retrieval.}
The forward operator of the phase retrieval task takes the absolute value of the 2D Discrete Fourier Transform $F$ applied to $x$: $|F x|$. However, since this task is known to be highly ill-posed \cite{hayes1982reconstruction,chung2023diffusion}, an oversampling matrix $P$ is also applied  (with oversampling ratio 1 in this work):
\begin{equation*}
    y \sim \calN(y | |FPx|, \sigma^2 I).
\end{equation*}

\paragraph{High dynamic range reconstruction.}
In the HDR forward model, pixel values are scaled by a factor of 2 before truncation back to the range [-1, 1]. Therefore, the measurement likelihood is as follows:
\begin{equation*}
    y \sim \calN(y | {\rm clip}(2x, -1, 1), \sigma^2 I),
\end{equation*}
where ${\rm clip}(\cdot, -1, 1)$ truncates all input values to the range [-1, 1].

\section{Additional experiments}\label{app:additional_exp}

\paragraph{Numerical SDE solver comparison.}
Alternative numerical methods to EM \eqref{eq:sde-x1-euler} can be applied to discretize the Langevin dynamics SDE, such as the exponential integrator (EI) \cite{hochbruck2010exponential}. The EI scheme discretizes the nonlinear drift term $g^i=\left.\nabla_{x_1} L_y(x_0)\right|_{x_1=z^i}$ and integrates the continuous-time dynamics arising from the linear term:
\begin{equation*}
    z^{i+1} = e^{-\tau}z^i - (1 - e^{-\tau})g^i + \sqrt{1-e^{-2\tau}}\xi^i,
\end{equation*}
where $\xi^i\sim\calN(0,I)$. In Table~\ref{tab:em_vs_ei}, quantitative comparison between our method using EM versus EI is shown on generating 10 samples for 100 images from the LSUN-Bedroom validation dataset, where the forward operator is nonlinear blurring. The same hyper-parameters are used for both methods, which are outlined in Table~\ref{tab:lsun_fidelity_hyperparams}. In this case, there is a marginal improvement in most metrics when using the EI scheme.

\begin{table*}[h!]
\centering
\tabcolsep=0.08cm
\caption{Comparison between our method with EM and EI integration on the nonlinear deblur task on LSUN-Bedroom (256 x 256).}
\label{tab:em_vs_ei}
\begin{tabular}{l|cccc}
\toprule[1.5pt]
Method & PSNR $\uparrow$  & SSIM $\uparrow$  & LPIPS $\downarrow$ & FID $\downarrow$   \\ \midrule[1pt]
Ours-EM(1-step)     & 20.3 & 0.566 & 0.440 & 76.7   \\
Ours-EM(2-step)     & 18.7 & 0.501 & 0.492 & 73.3    \\
\hline
Ours-EI(1-step)     & 20.5 & 0.569 & 0.437 & 76.3   \\
Ours-EI(2-step)     & 18.7 & 0.504 & 0.491 & 74.2   \\
\bottomrule[1.5pt]
\end{tabular}%
\vspace{-1em}
\end{table*}

\section{Additional qualitative results}\label{app:additional_qualitative}
Visualizations of additional reconstructions from our method corresponding to the linear and nonlinear experiments from Section~\ref{sec:fidelity_exp} can be found in Figures~\ref{fig:lsun_inpaint_sr_supp_fidelity},~\ref{fig:lsun_GD_supp_fidelity},~\ref{fig:imnet_inpaint_sr_supp}, \ref{fig:nlblur_hdr_supp}, and \ref{fig:phase_supp}. Additionally, diverse sets of samples from our one-step / two-step CM method corresponding to the experiments of Section~\ref{sec:diversity_exp} are visualized in Figures~\ref{fig:inp_div_supp},~\ref{fig:sr_8x_div_supp},~\ref{fig:gd_div_supp},~\ref{fig:nlblur_div_supp}, \ref{fig:hdr_div_supp}, and \ref{fig:phase_div_supp}. Finally, diverse samples via the linear tasks on ImageNet (64 x 64) are shown in Figures~\ref{fig:imnet_inpaint_div_supp}, \ref{fig:imnet_sr_div_supp}, and \ref{fig:imnet_gblur_div_supp}. In these experiments, we use the one-step CM sampler with the same hyper-parameters as in Table~\ref{tab:imagenet_fidelity_hyperparams}, but with $\tau=4\times10^{-4}$ for inpainting, $\tau=9\times10^{-4}$ for super-resolution, and $\tau=5\times10^{-5}$ for Gaussian deblur.

\begin{figure}[p]
    \centering
    \vspace{-1em}
    \includegraphics[width=\textwidth]{ 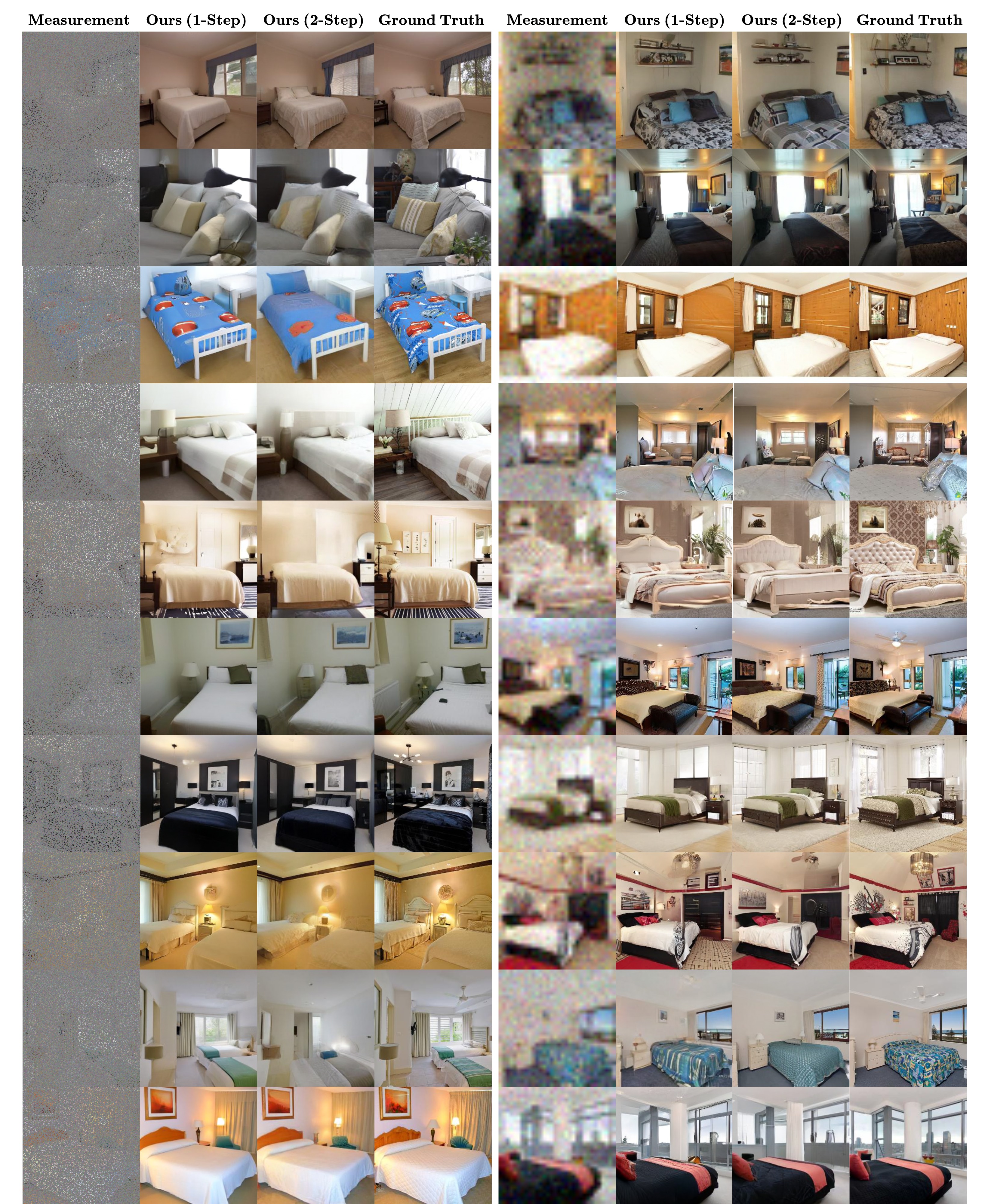} 
    \vspace{-0.2in}
    \caption{Additional image reconstructions for inpainting (left) and 8x super-resolution (right) on LSUN-Bedroom (256 x 256).}
    \vspace{-2em}
    \label{fig:lsun_inpaint_sr_supp_fidelity}
\end{figure}
\begin{figure}[p]
    \centering
    \vspace{-1em}
    \includegraphics[width=\textwidth]{ 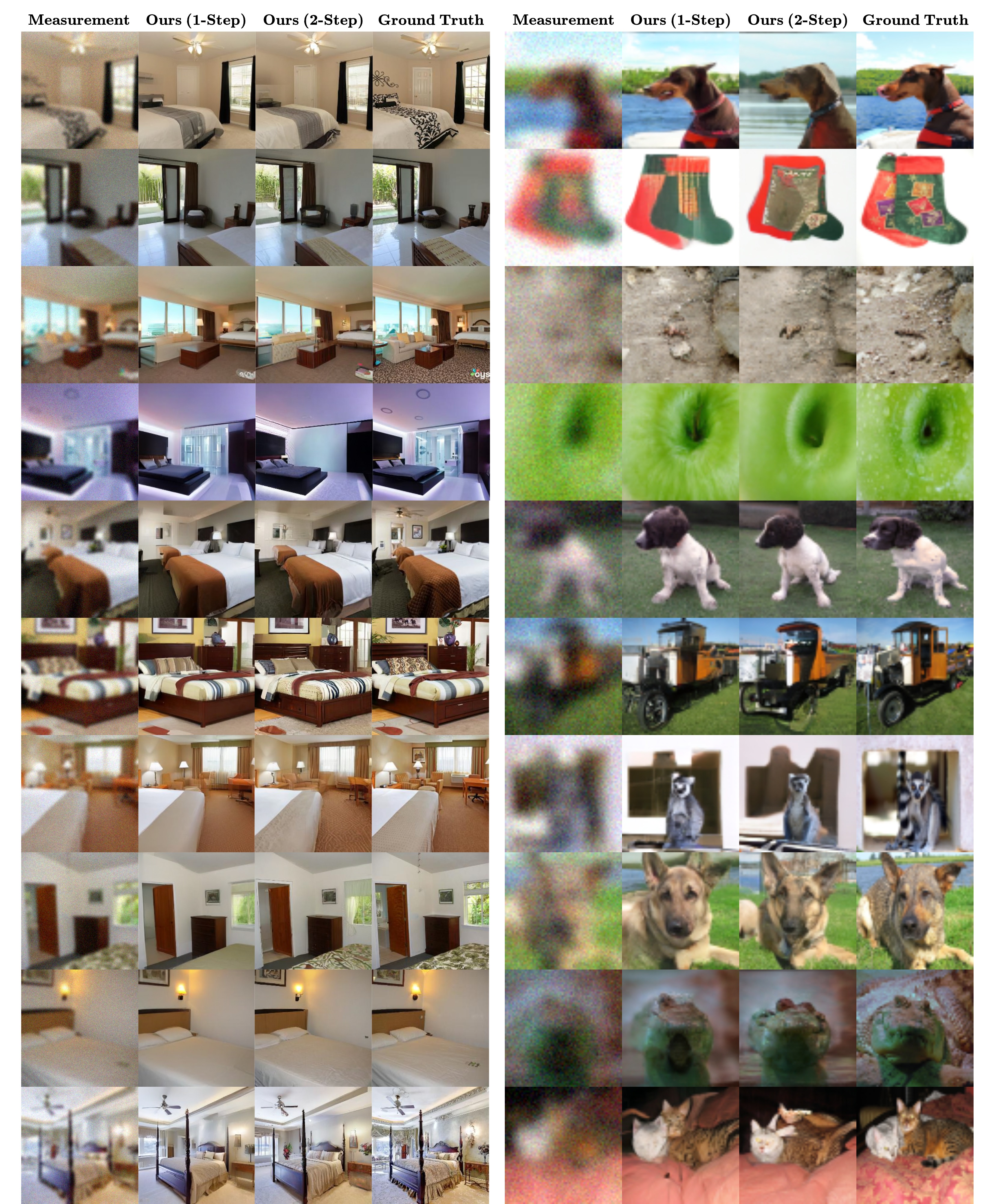} 
    \vspace{-0.1in}
    \caption{Additional image reconstructions for Gaussian Deblurring on LSUN-Bedroom (256 x 256) (left) and ImageNet (64 x 64) (right).}
    \vspace{-2em}
    \label{fig:lsun_GD_supp_fidelity}
\end{figure}
\begin{figure}[p]
    \centering
    \vspace{-1em}
    \includegraphics[width=\textwidth]{ 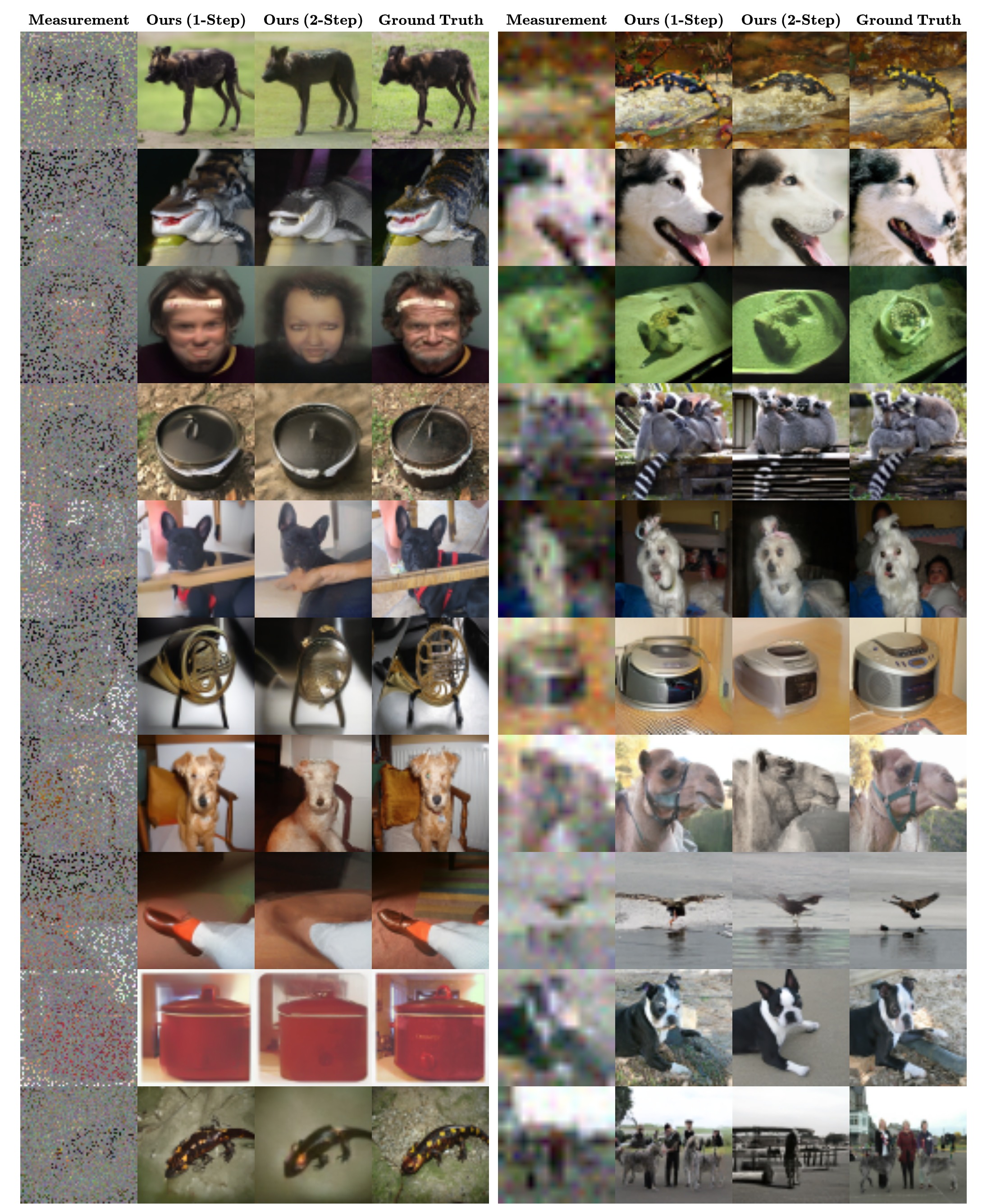} 
    \vspace{-0.2in}
    \caption{Additional image reconstructions for inpainting (left) and 4x super-resolution (right) on ImageNet (64 x 64).}
    \vspace{-2em}
    \label{fig:imnet_inpaint_sr_supp}
\end{figure}

\begin{figure}[p]
    \centering
    \vspace{-1em}
    \includegraphics[width=\textwidth]{ 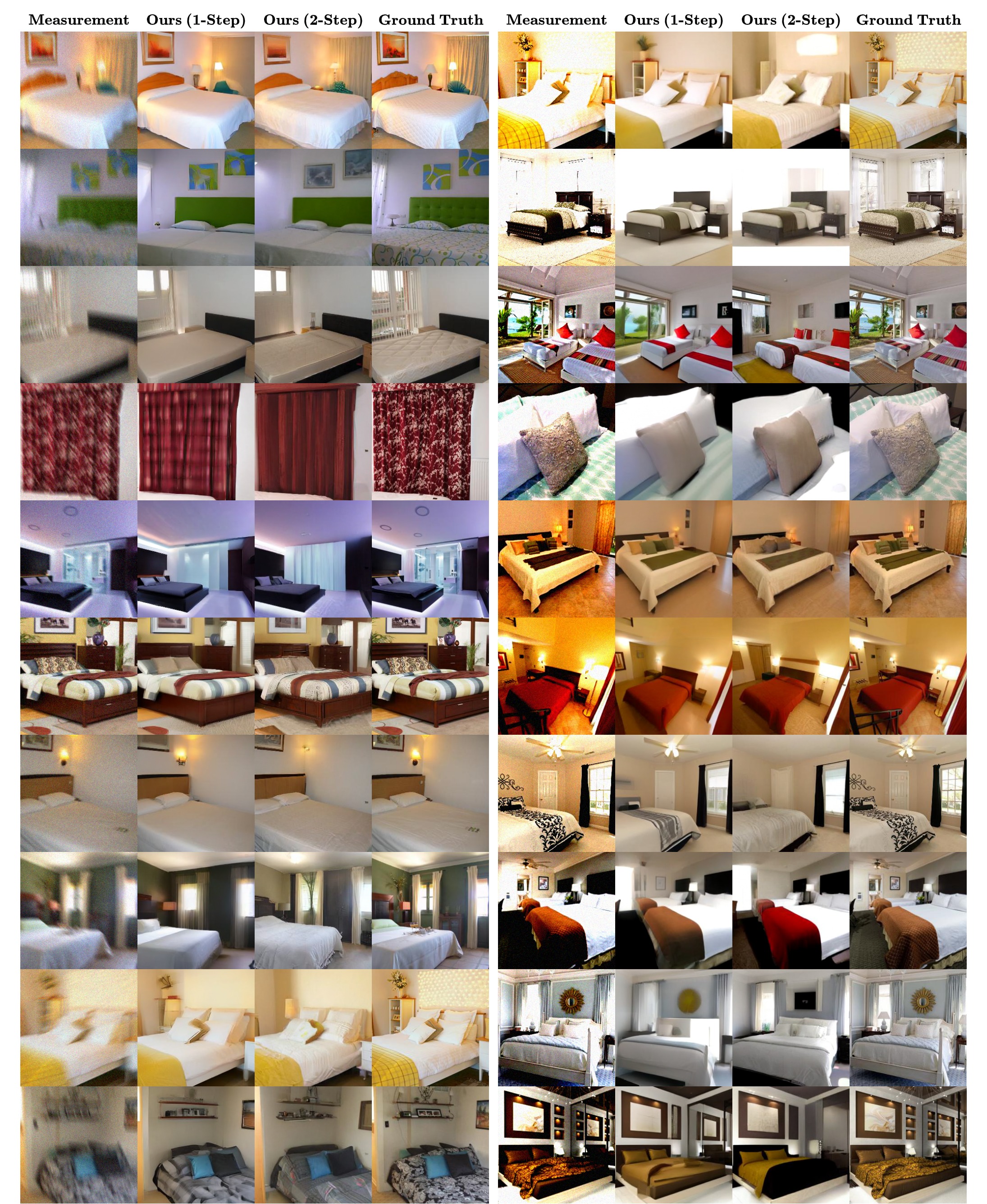} 
    \vspace{-0.2in}
    \caption{Additional image reconstructions for nonlinear deblur (left) and HDR reconstruction (right) on LSUN-Bedroom (256 x 256).}
    \vspace{-2em}
    \label{fig:nlblur_hdr_supp}
\end{figure}

\begin{figure}[p]
    \centering
    \vspace{-1em}
    \includegraphics[width=0.5\textwidth]{ 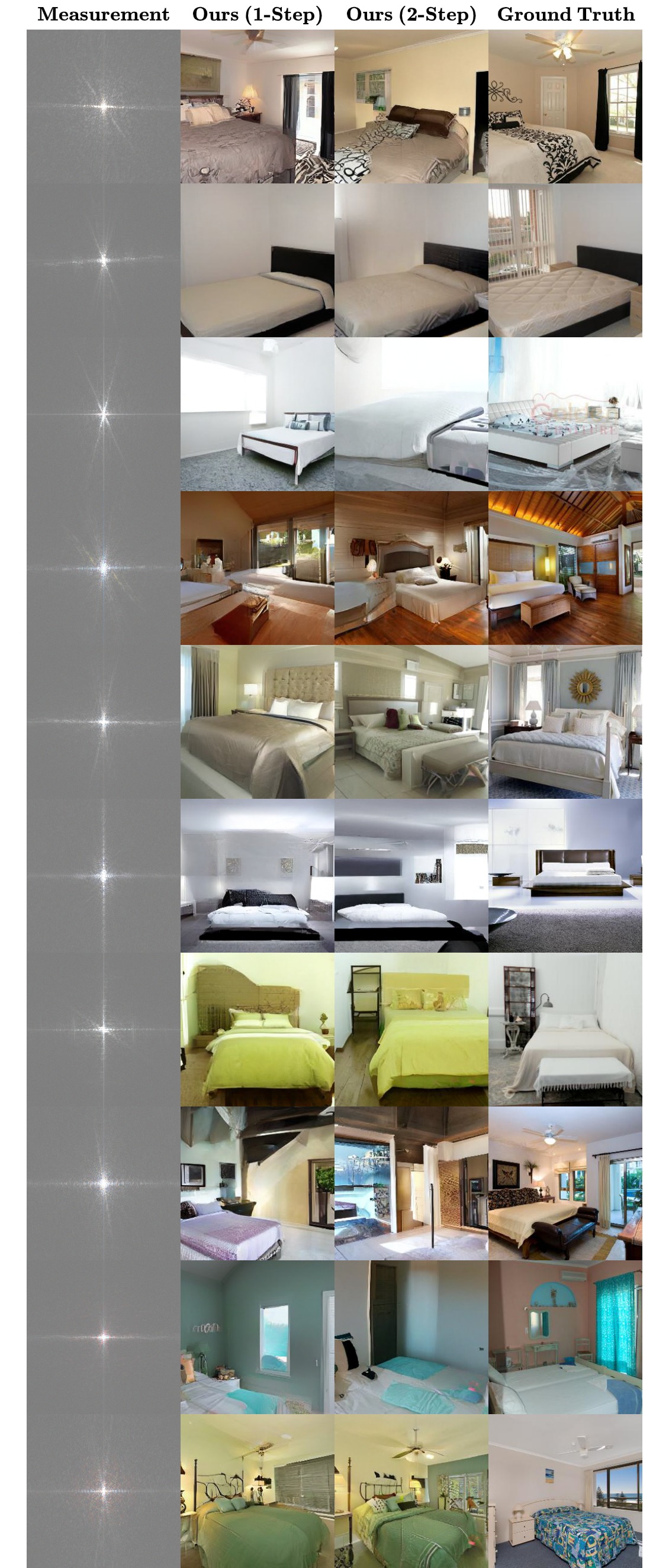} 
    \caption{Additional image reconstructions for phase retrieval on LSUN-Bedroom (256 x 256).}
    \label{fig:phase_supp}
\end{figure}

\begin{figure}[p]
    \centering
    \vspace{-1em}
    \includegraphics[width=\textwidth]{ 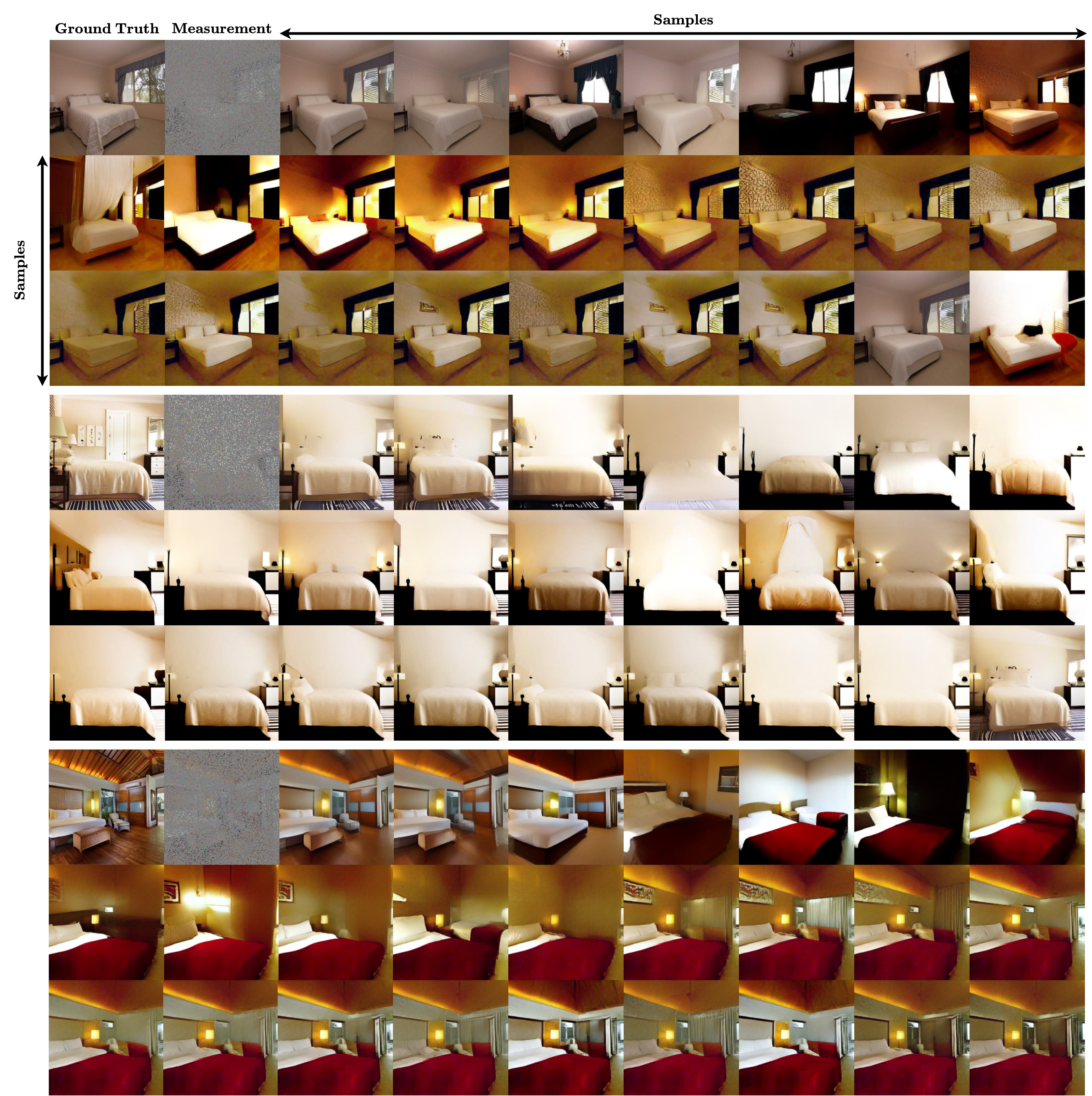} 
    \vspace{-0.2in}
    \caption{Additional sets of samples for Inpainting (10\%) on LSUN-Bedroom (256 x 256).}
    \vspace{-2em}
    \label{fig:inp_div_supp}
\end{figure}
\begin{figure}[p]
    \centering
    \vspace{-1em}
    \includegraphics[width=\textwidth]{ 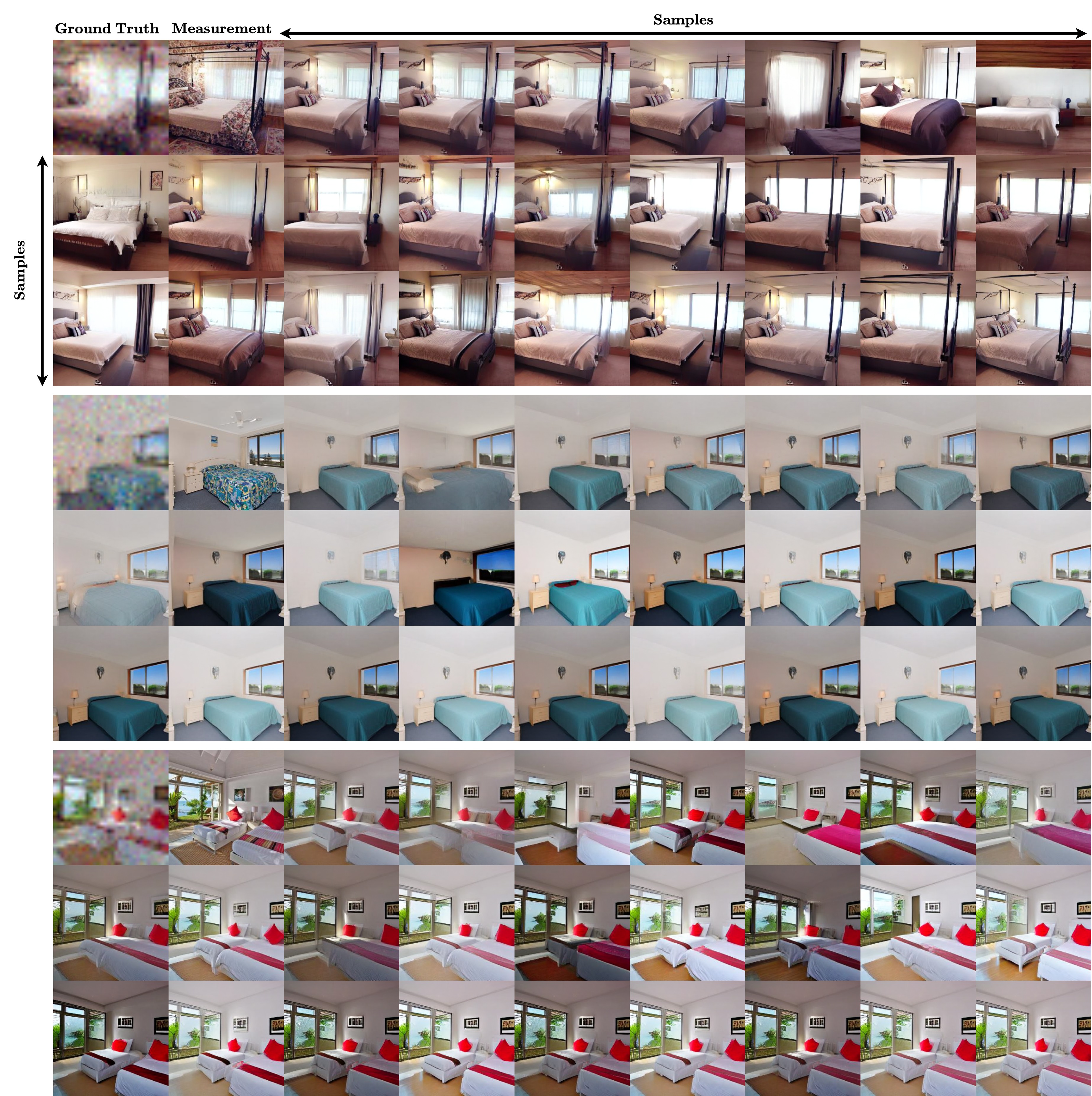} 
    \vspace{-0.2in}
    \caption{Additional sets of samples for SR (8x) on LSUN-Bedroom (256 x 256).}
    \vspace{-2em}
    \label{fig:sr_8x_div_supp}
\end{figure}
\begin{figure}[p]
    \centering
    \vspace{-1em}
    \includegraphics[width=\textwidth]{ 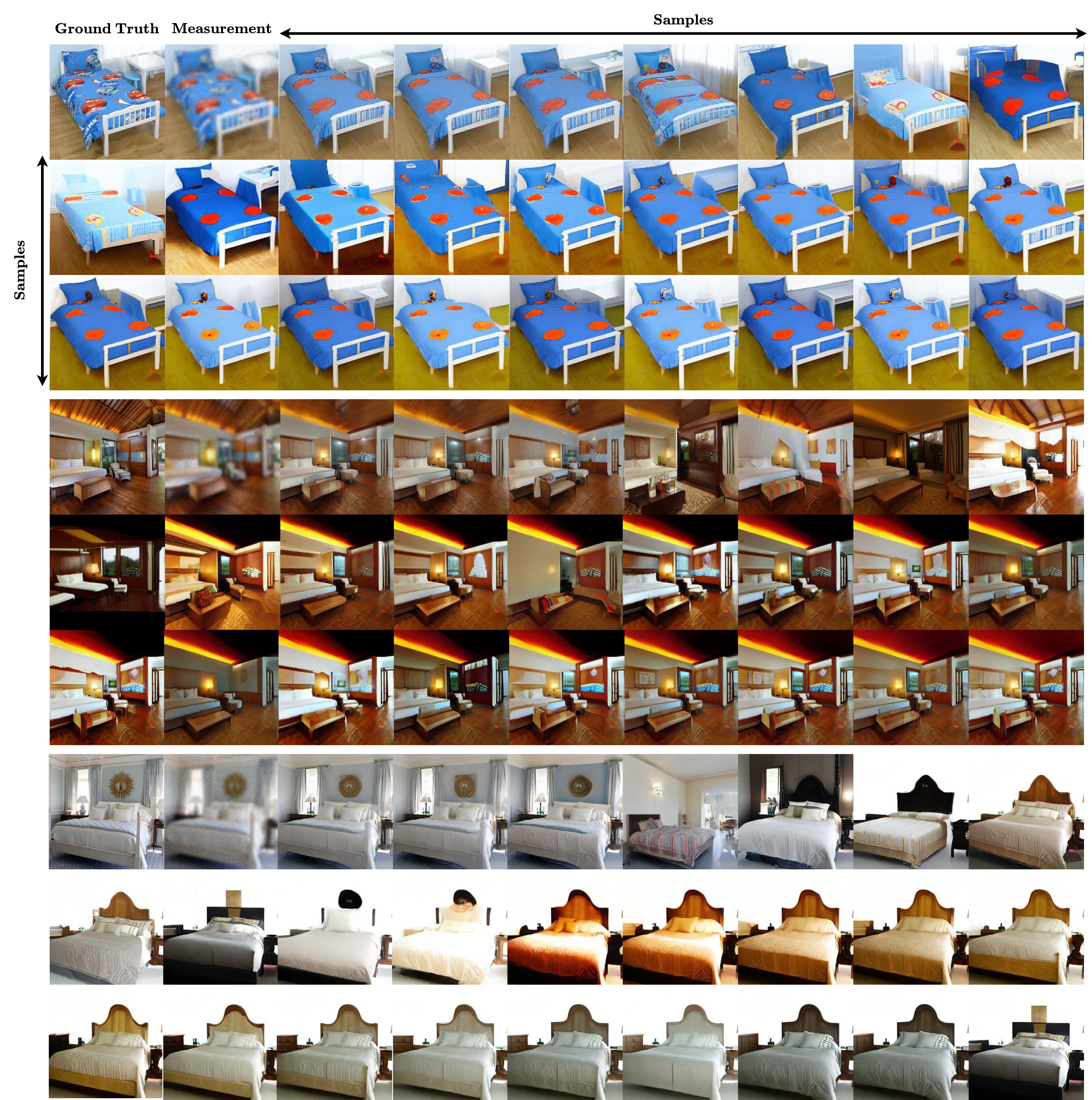} 
    \vspace{-0.2in}
    \caption{Additional sets of samples for SR (8x) on LSUN-Bedroom (256 x 256) for 2-step method.}
    \vspace{-2em}
    \label{fig:gd_div_supp}
\end{figure}
\begin{figure}[p]
    \centering
    \vspace{-1em}
    \includegraphics[width=\textwidth]{ 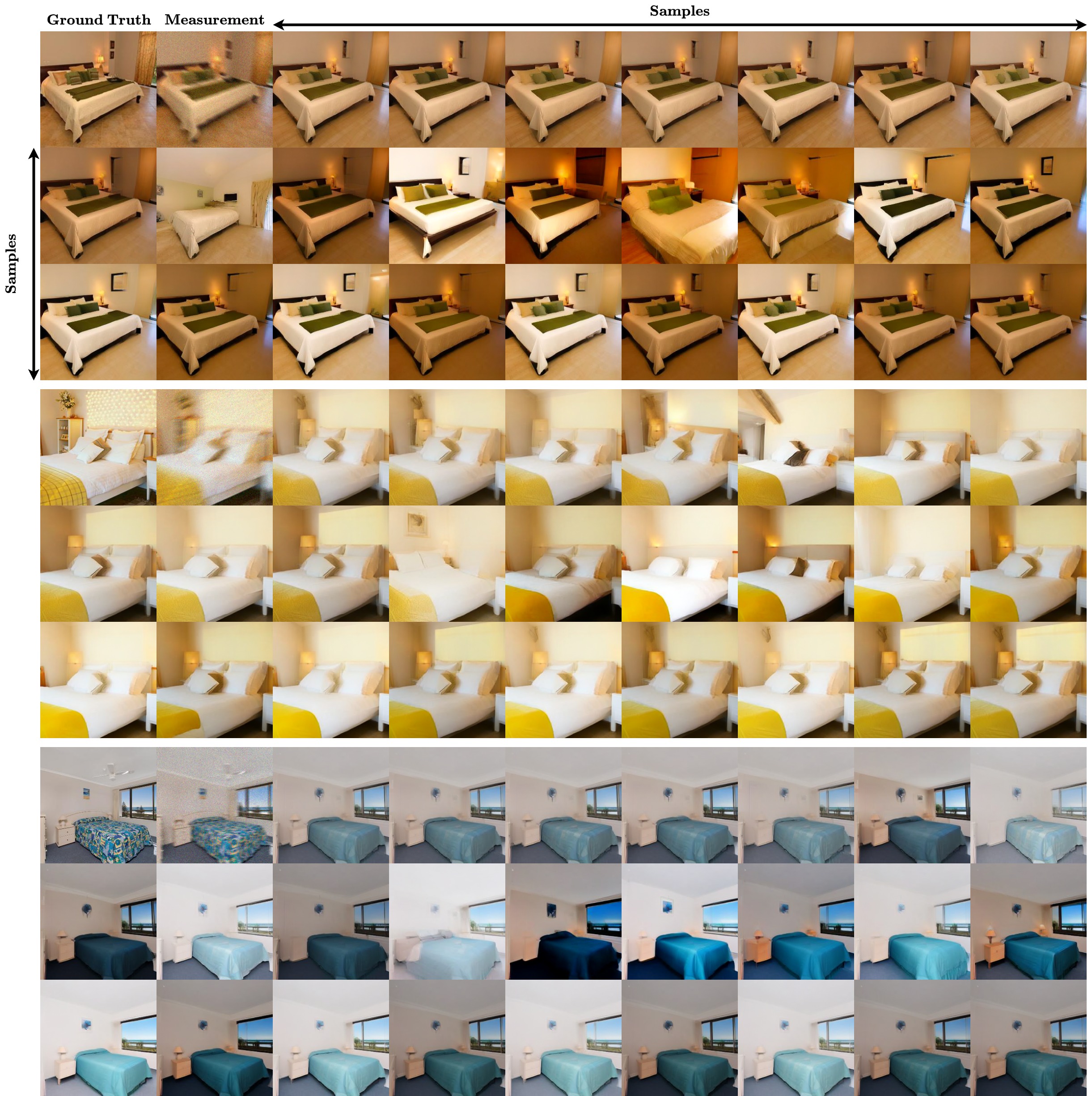} 
    \vspace{-0.2in}
    \caption{Additional sets of samples for nonlinear deblur on LSUN-Bedroom (256 x 256).}
    \vspace{-2em}
    \label{fig:nlblur_div_supp}
\end{figure}

\begin{figure}[p]
    \centering
    \vspace{-1em}
    \includegraphics[width=\textwidth]{ 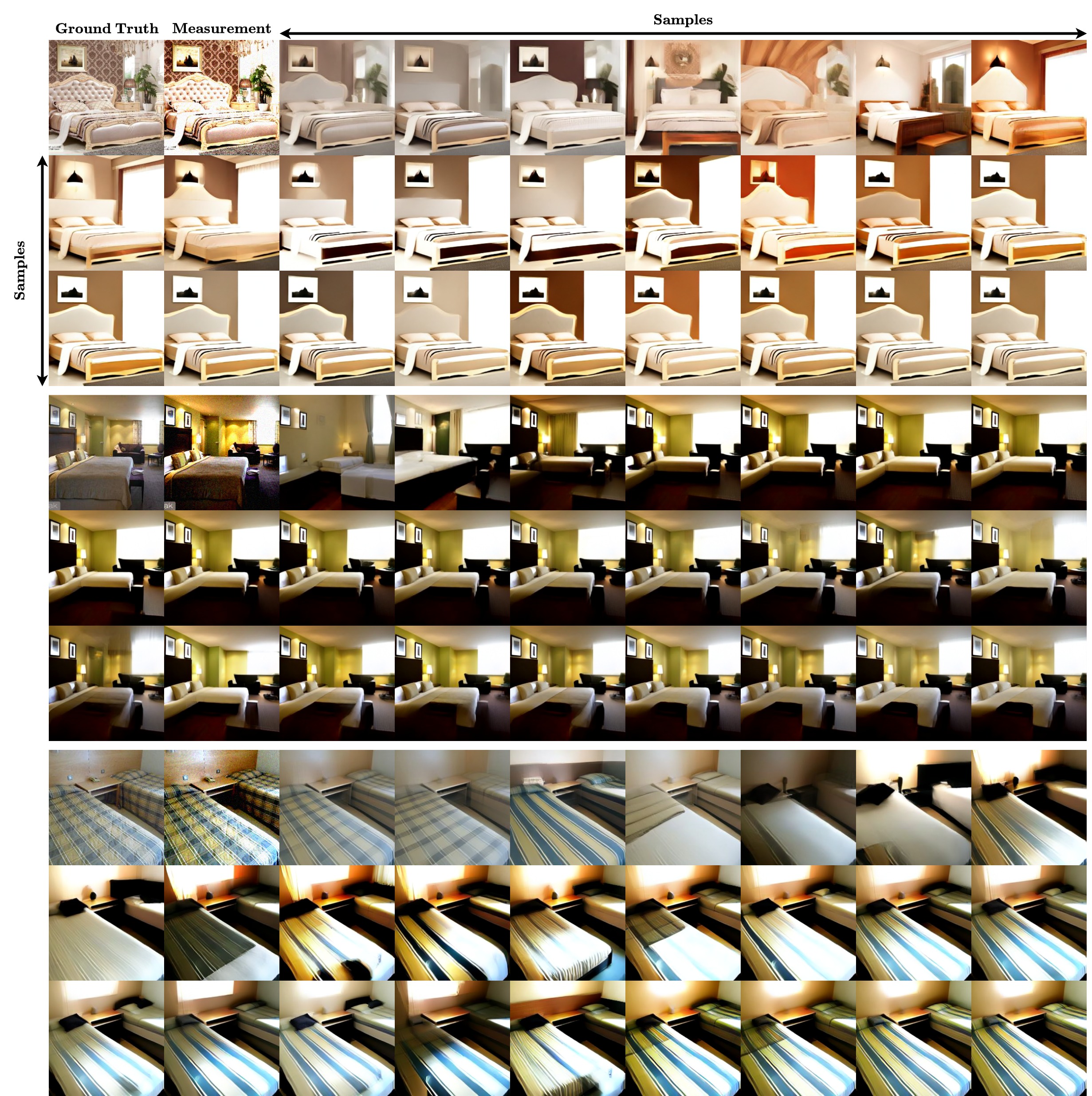} 
    \vspace{-0.2in}
    \caption{Additional sets of samples for HDR reconstruction on LSUN-Bedroom (256 x 256).}
    \vspace{-2em}
    \label{fig:hdr_div_supp}
\end{figure}

\begin{figure}[p]
    \centering
    \vspace{-1em}
    \includegraphics[width=\textwidth]{ 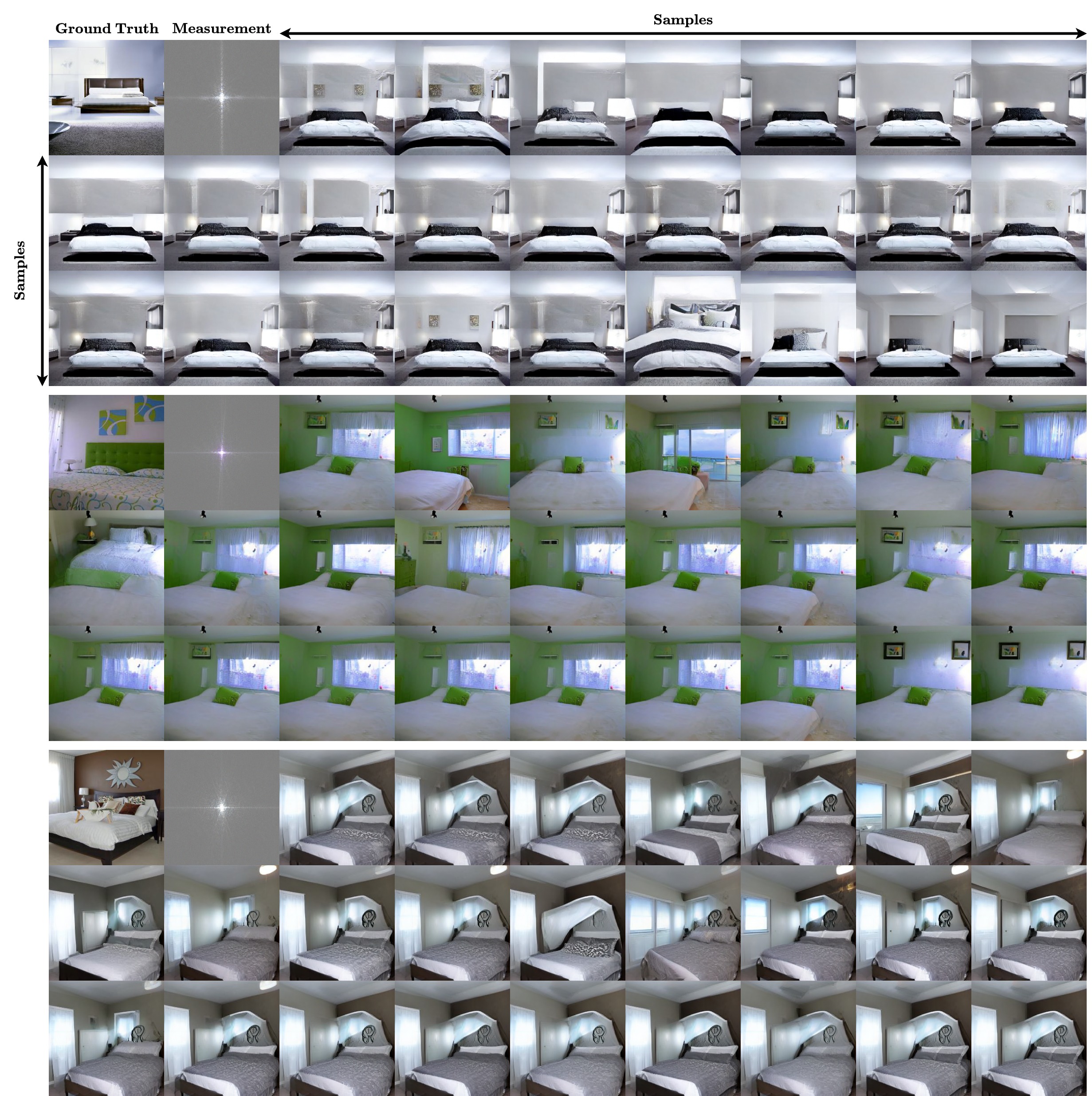} 
    \vspace{-0.2in}
    \caption{Additional sets of samples for phase retrieval on LSUN-Bedroom (256 x 256).}
    \vspace{-2em}
    \label{fig:phase_div_supp}
\end{figure}

\begin{figure}[p]
    \centering
    \vspace{-1em}
    \includegraphics[width=\textwidth]{ 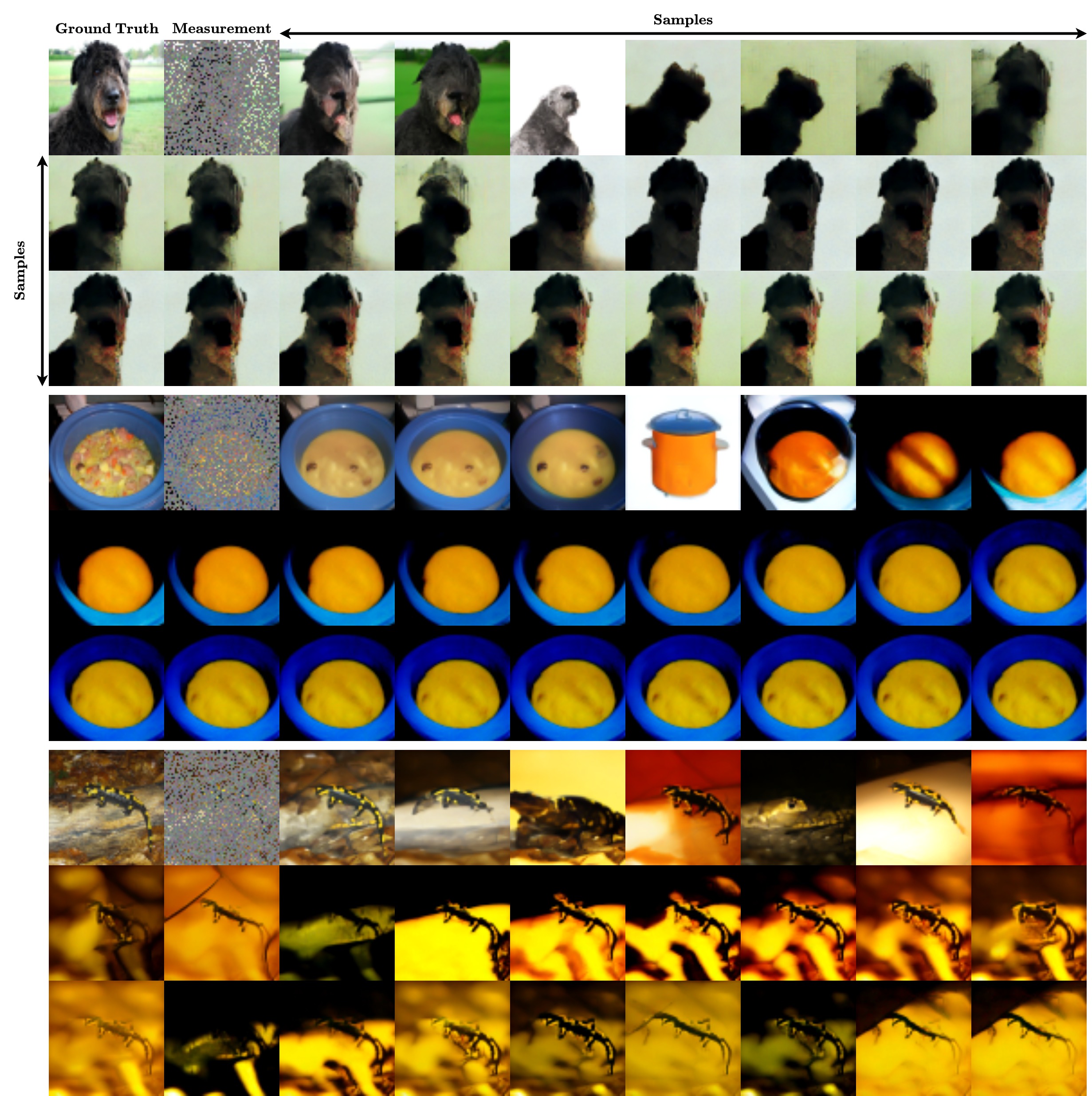} 
    \vspace{-0.2in}
    \caption{Sets of samples for 20\% inpainting on ImageNet (64 x 64).}
    \vspace{-2em}
    \label{fig:imnet_inpaint_div_supp}
\end{figure}

\begin{figure}[p]
    \centering
    \vspace{-1em}
    \includegraphics[width=\textwidth]{ 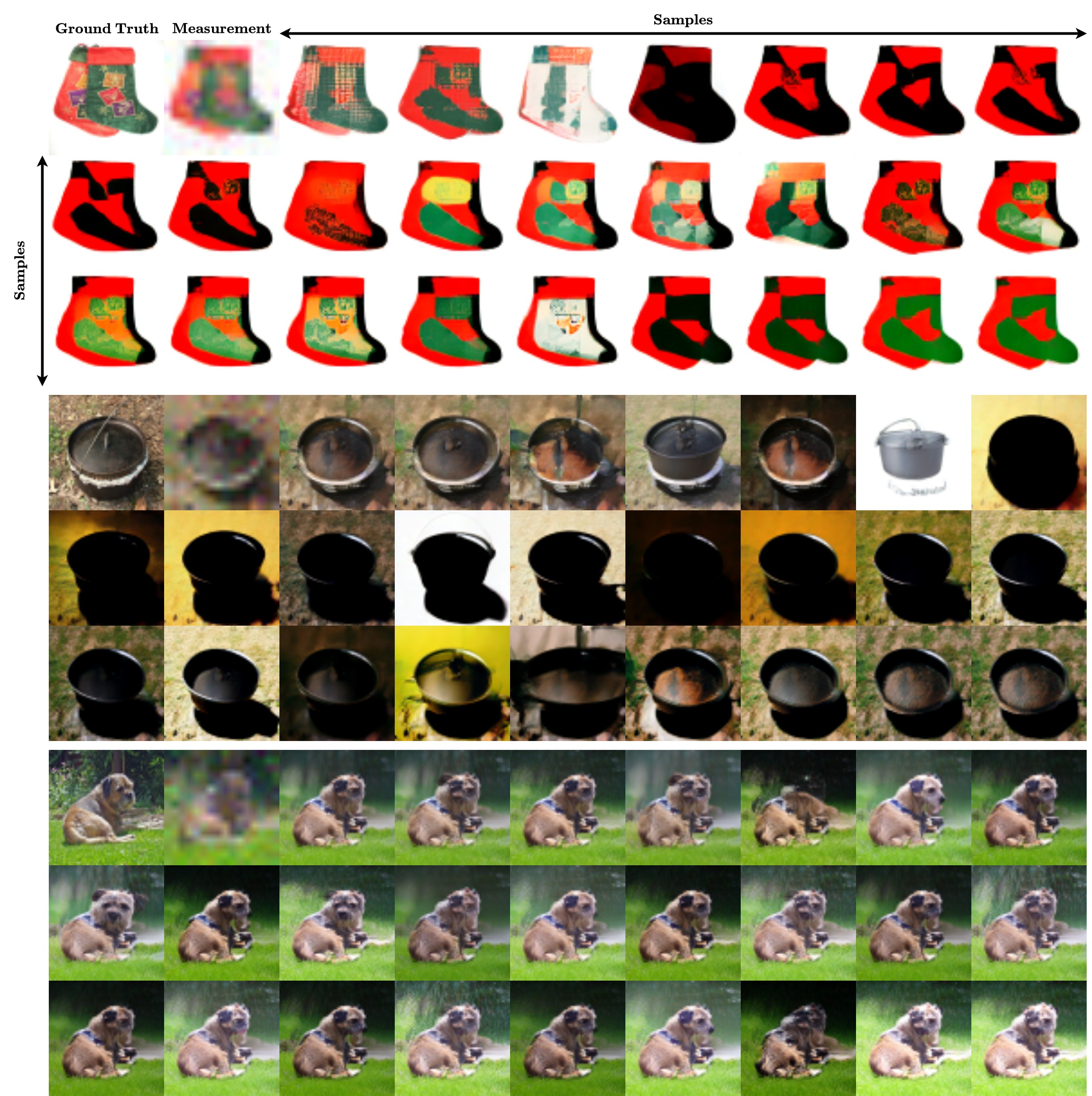} 
    \vspace{-0.2in}
    \caption{Sets of samples for 4x super-resolution on ImageNet (64 x 64).}
    \vspace{-2em}
    \label{fig:imnet_sr_div_supp}
\end{figure}

\begin{figure}[p]
    \centering
    \vspace{-1em}
    \includegraphics[width=\textwidth]{ 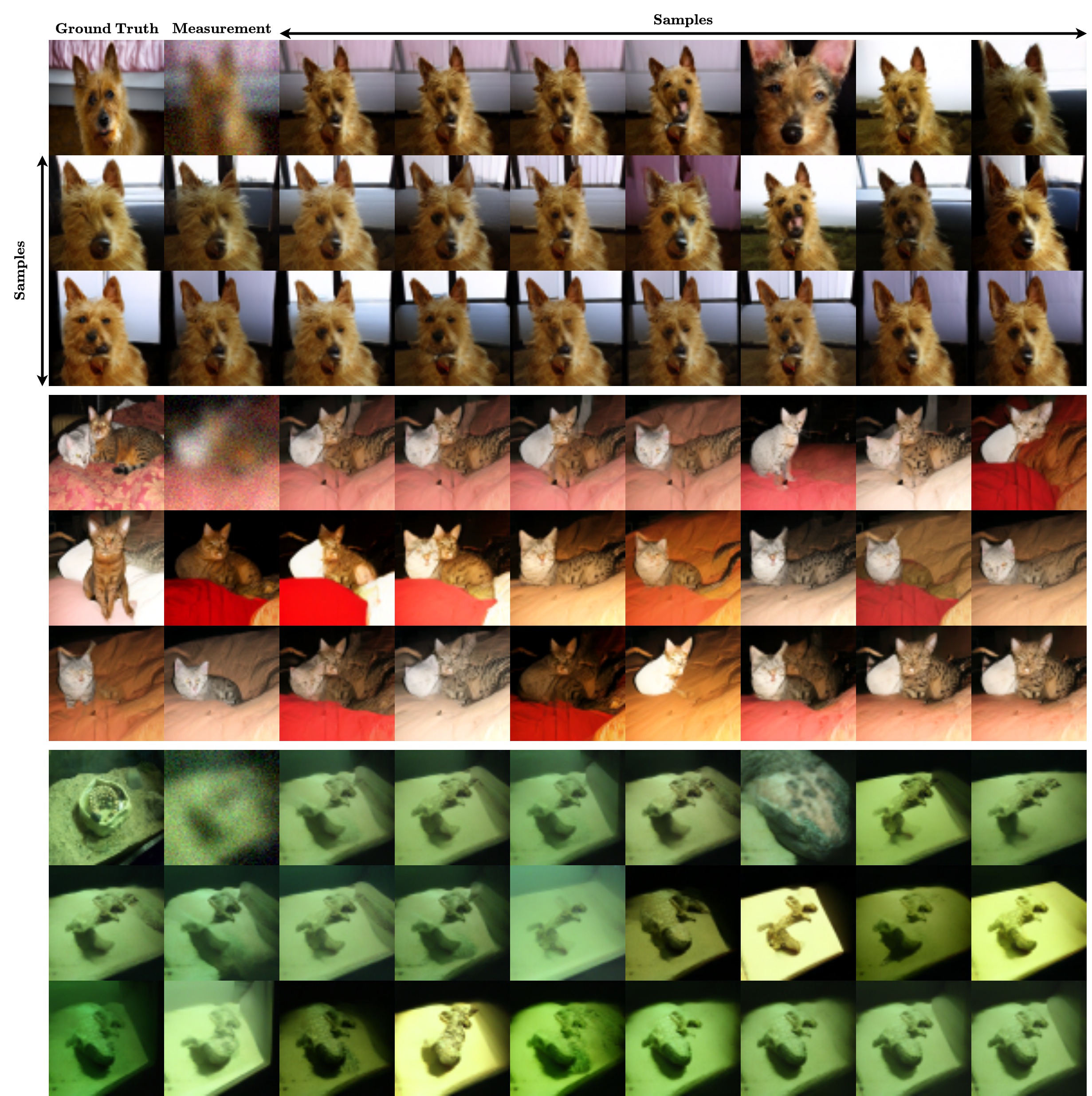} 
    \vspace{-0.2in}
    \caption{Sets of samples for Gaussian deblurring on ImageNet (64 x 64).}
    \vspace{-2em}
    \label{fig:imnet_gblur_div_supp}
\end{figure}

\end{document}